\newtheorem{theorem}{Theorem}[section]
\newtheorem{corollary}[theorem]{Corollary}
\newtheorem{proposition}[theorem]{Proposition}
\newtheorem{lemma}[theorem]{Lemma}
\newtheorem{definition}[theorem]{Definition}
\newcommand{\R}{\mathbb{R}}
\newcommand{\X}{\mathbf{X}}
\newcommand{\W}{\mathbf{W}}
\newcommand{\M}{\mathbf{M}}
\newcommand{\hh}{\hat{h}}
\newcommand{\Q}{\mathbf{Q}}
\newcommand{\K}{\mathbf{K}}
\newcommand{\V}{\mathbf{V}}
\newcommand{\XX}{\mathbf{X}}
\newcommand{\Ho}{\mathbf{H}}
\newcommand{\U}{\mathbf{U}}
\newcommand{\Lam}{\mathbf{\Lambda}}
\newcommand{\Hh}{\widehat{\mathbf{H}}}
\newcommand{\Qh}{\widehat{\mathbf{Q}}}
\newcommand{\Kh}{\widehat{\mathbf{K}}}
\newcommand{\Vh}{\widehat{\mathbf{V}}}
\newcommand{\qh}{\hat{q}}
\newcommand{\kh}{\hat{k}}
\newcommand{\vh}{\hat{v}}
\newcommand{\Hb}{\Bar{\mathbf{H}}}
\newcommand{\Vb}{\Bar{\mathbf{V}}}
\newcommand{\Xb}{\Bar{\mathbf{X}}}
\newcommand{\qb}{\Bar{q}}
\newcommand{\kb}{\Bar{k}}
\newcommand{\vb}{\Bar{v}}
\newcommand{\hb}{\Bar{h}}
\newcommand{\Wh}{\widehat{\mathbf{W}}}
\newcommand{\llo}{^{(\ell)}}
\newcommand{\llh}{^{(\ell+1/2)}}
\newcommand{\llt}{^{(\ell+3/4)}}
\newcommand{\lln}{^{(\ell+1)}}
\newcommand{\ah}{\widehat{a}}
\newcommand{\eps}{\varepsilon}
\newcommand{\norm}[1]{\lVert #1 \rVert}
\newcommand{\abs}[1]{\lvert #1 \rvert}
\newcommand{\tp}{^\mathsf{T}}
\newcommand{\w}[2]{\W_{#1}^{(#2)}}
\newcommand{\wh}[2]{\widehat{\W}_{#1}^{(#2)}}
\DeclareMathOperator{\bigO}{\mathcal{O}}
\DeclareMathOperator{\ReLU}{ReLU}
\DeclareMathOperator{\rank}{rank}
\title{A Theory for Compressibility of Graph Transformers for Transductive Learning}
\author{%
  Hamed Shirzad \\
  University of British Columbia\\
  \texttt{shirzad@cs.ubc.ca} \\
  \And
  Honghao Lin \\
  Carnegie Mellon University\\
  \texttt{honghaol@andrew.cmu.edu} \\
  \And
  Ameya Velingker \\
  Independent Researcher\footnotemark[1] \\
  \texttt{ameyav@gmail.com}
    \And
  Balaji Venkatachalam \\
  Meta\thanks{Work done in part while at Google.} \\
  \texttt{bave@meta.com}
  \And
  David P. Woodruff \\
  CMU \& Google Research \\
  \texttt{dwoodruf@cs.cmu.edu}
  \And
  Danica J.\ Sutherland \\
  UBC \& Amii\\
  \texttt{dsuth@cs.ubc.ca}
}
\begin{document}

\maketitle

\begin{abstract}
Transductive tasks on graphs differ fundamentally from typical supervised machine learning tasks, as the independent and identically distributed (i.i.d.) assumption does not hold among samples. Instead, all train/test/validation samples are present during training, making them more akin to a semi-supervised task. These differences make the analysis of the models substantially different from other models. Recently, Graph Transformers have significantly improved results on these datasets by overcoming long-range dependency problems. However, the quadratic complexity of full Transformers has driven the community to explore more efficient variants, such as those with sparser attention patterns. While the attention matrix has been extensively discussed, the hidden dimension or width of the network has received less attention. In this work, we establish some theoretical bounds on how and under what conditions the hidden dimension of these networks can be compressed. Our results apply to both sparse and dense variants of Graph Transformers.
\end{abstract}

\section{Introduction}

Graphs are a versatile data structure that naturally models relations between entities across many domains, such as social networks, citation graphs, and systems and code analysis. One common goal is node-level prediction on a single large graph, i.e., \emph{transductive node classification}. In this scenario, one is typically presented with a semi-supervised task in which some node labels are provided (for training or validation) and others are to be predicted. This setting is challenging, as the different nodes do not obey typical distributional assumptions (e.g., they are not independent), and the classification of one node can affect that of other nodes via the neighborhood structure. Examples of tasks in these settings include identifying malicious users in a social network, predicting protein functionality in a protein-protein interaction network, or categorizing products based on a co-purchase network \citep{ogbPaper, platonov2023critical,shchur2018pitfalls,jure2014snap}.

Many variants of Graph Neural Networks (GNNs) are able to tackle transductive tasks on graphs \citep{kipf2016semi,velickovic2018graph,hamilton2017inductive}. A recent leap in progress on GNNs has been the use of Transformers to perform message-passing operations in these networks. Transformers~\citep{VaswaniSPUJGKP17} have brought about revolutionary changes in several domains of machine learning, ranging from natural language processing \citep{VaswaniSPUJGKP17, devlin2018bert, zaheer2020big} to computer vision \citep{dosovitskiy2020image} and, more recently, geometric deep learning \citep{DwivediBresson21, kreuzer2021rethinking, Ying2021DoTR, rampavsek2022recipe, shirzad2023exphormer, shirzad2024spexphormer, muller2023attending}. The use of Transformers on graph data involves attention-based message passing on a certain computational graph; the computational graph can either be a ``full'' graph (where each node attends to every other node) or a sparse graph. Although the typical choice is a full graph, the resulting quadratic complexity is generally infeasible for sufficiently large graphs. Thus, for applications in which graphs are extremely large, one often uses Transformers on a sparse computation graph. Indeed, many linear sparse or low-rank approximations have been proposed for Transformers on graphs \citep{shirzad2023exphormer, shirzad2024spexphormer, wu2022nodeformer, deng2024polynormer}.

A significant body of research on scaling Transformers focuses on how a Transformer model can be sparsified or how low-rank approximations of the attention matrix can be calculated. However, an often overlooked aspect in these calculations is determining how large the hidden dimension needs to be, or, alternatively, how much this hidden dimension can be reduced. A Transformer with a hidden dimension $D$ operating on a graph with $n$ nodes and with an attention pattern consisting of $m$ attention edges (e.g., $m = \Theta(n^2)$ in a full Transformer), has a computational complexity of $\bigO(nD^2 + mD)$. In most previous works, $D$ is typically considered a constant and therefore dominated by $m$ and $n$. However, there are a number of theoretical works on GNNs show that, in order to achieve certain properties (e.g., expressivity), the hidden dimension often needs to depend on the graph size (i.e., $D$ is superconstant)~\citep{sanford2024representational,sanford2024transformers,sanford2024understanding,loukas2019graph}. One notable exception is the work of \citet{shirzad2024spexphormer}, in which a low-width network is first used to estimate attention scores, after which the estimates are used to sparsify the network and train a larger model with the computed sparsity pattern; that work, however, is primarily empirical, with only limited theoretical results (described in more detail below).

In this work, we address the aforementioned shortcomings and theoretically analyze the compressibility of a single-head Transformer model. Assuming a Graph Transformer is trained with some hidden dimension $D$, we seek to determine some bounds showing how small a compressed network can be while approximately preserving the outputs of the original network. In particular, we provide a series of results showing that, in various situations, the hidden dimension can be compressed substantially while preserving the output of the original network up to an additive error of $\bigO(\epsilon)$ and also preserving attention scores of the original network up to a $1\pm  \bigO(\epsilon)$ multiplicative approximation factor, for arbitrarily small $\epsilon$. We note that almost all of our results apply irrespective of the attention pattern, which can range from the sparsity pattern of the underlying graph structure of the data to a dense pattern, or even something in between, such as the patterns used by \citet{shirzad2023exphormer}. Furthermore, we complement our theoretical results with empirical results showing the existence of small networks with results that are competitive with those of a given large network.

\section{Notation \& Assumptions}
\label{sec:notations}

\paragraph{Graphs} Graphs are data structures consisting of a set of nodes $V$, and a set of edges $E$. Each edge in $E$ is an ordered pair of nodes. Undirected edges can be represented as two directed edges. Nodes typically have associated features represented by a matrix $\XX \in \R^{d_{in} \times n}$, where $d_{in}$ is the dimension of the input features for each node. 

We decouple the attention pattern from the graph structure and use $m$ to denote the number of attention edges. In a full Transformer, $m = n^2$. If we apply attention only over the graph edges, giving a model very similar to message-passing-based networks, $m$ would be the number of graph edges. We use $\mathcal{N}(v)$ to denote the set of neighbors of node $v$ in the attention graph.

\paragraph{Transformer Formulation} With some slight simplifications, we can formulate the $\ell$th layer as:
\begingroup\allowdisplaybreaks
\begin{align*}
    & \V\llo = \W_V\llo h\llo \qquad \Q\llo = \W_Q\llo h\llo \qquad \K\llo = \W_K\llo h\llo \\
    &a_{ij}^{(l)} = \frac{\exp\left({\mathbf{K}_j^{(\ell)} \cdot \mathbf{Q}_i^{(\ell)}}\right)}{\sum_{u \in \mathcal{N}_H(i)} \exp\left({\mathbf{K}_u^{(\ell)} \cdot \mathbf{Q}_i^{(\ell)}}\right)},\\
    &h_i\llh = \sum_{j \in \mathcal{N}(i)} a_{ij}^{(l)}\mathbf{V}_j^{(\ell)},\\
    &h_i\llt = \sigma \left(\W_1^{(\ell)} \left(h_i\llh\right)\right), \\
    &h_i\lln = \W_2^{(\ell)} h_i\llt.
\end{align*}
\endgroup
Here, $h_i\llo$ is the output of the layer $\ell-1$ for node $i$.
$\W_V\llo, \W_Q\llo, \W_K\llo, \W_1, \W_2 \in \R^{D \times D}$, except for the first layer where $\W_V^{(0)}, \W_Q^{(0)}, \W_K^{(0)} \in \R^{D \times d_{in}}$.
$\sigma$ can be any 1-Lipchitz elementwise activation function, such as $\ReLU$.
The standard self-attention formulation uses a
$\frac{1}{\sqrt{D}}$ normalization, but we ignore this since it can be combined with $\W_Q$ and $\W_K$ matrices.
To simplify the notation, we define $\Ho^{(0)} := \XX$. We use $\mathcal{T}(X)$ to refer to the large network trained with hidden dimension $D$ and has $L$ layers in total. $d$ always refers to a compressed network dimension size.

\paragraph{Other Notation} We use $\norm{x}$ to mean the Euclidean norm of a vector $x$ unless stated otherwise. We use capital bold letters for matrices and small letters with indices to refer to the columns of these matrices, e.g., $h_i\llo$ is the column $i$ of matrix $\Ho\llo$.
\cref{tab:notations} gives more.

\paragraph{Assumptions} We remove the normalization parts from the architecture but assume that in all input of the Transformer layers, $\norm{x_i}_2, \norm{h_i\llo}_2 \leq \sqrt{\alpha}$, and the operator norms of the matrices $\W_\cdot$ of each linear mapping are bounded by a constant $\beta$. In real Transformers, there is a \texttt{LayerNorm} between layers. 
The default node feature normalization function in both PyTorch-Geometric and DGL libraries, two of the most standard libraries for graph processing in PyTorch, normalize each node feature to $\norm{x_i}_1 = 1$ \citep{Fey/Lenssen/2019, wang2019deep}. Regarding the operator norm of the matrices, we also know that their operator norm is around two under standard initialization. If their operator norm does not change significantly during training, this assumption remains valid.

 Graph Transformers are typically fairly shallow; for example, Exphormer and Spexphormer usually employ networks with two to four layers \citep{shirzad2023exphormer,shirzad2024spexphormer}, while Nodeformer, Difformer, and SGFormer use at most three layers \citep{wu2022nodeformer,wu2023difformer,wu2024simplifying}.
 We thus assume $L = \bigO(1)$.

\section{Reducing the Attention Calculation Complexity Using JLT}

The attention calculation in practice is the most computationally expensive part of the model, because usually $m = \omega(n)$, and in full Transformers $m = n^2$. Also, $n$ is often much larger than the usual context length in natural language processing tasks, making the attention calculation part for Graph Transformers very costly. The attention calculation is the only part that depends on $m$; other parts only scale with $n$. In this section, we will show that $\W_Q, \W_K$ can be compressed into an $\R^{D \times d}$ or $\R^{d_{in} \times d}$ matrix for $d = \bigO(\frac{\log n}{\eps^2})$ with a bound of $\bigO(\eps)$ error on the output. This dimension reduction effectively reduces the computational complexity of the attention calculation part from $\bigO(mD)$ to $\bigO(md)$.
A version of this result also appeared in our recent work \citep{shirzad2024spexphormer}.

The Johnson-Lindenstrauss Lemma \citep{johnson1984extensions} is a powerful tool in theoretical computer science that helps preserve the pairwise distance between high-dimensional encodings when mapping them to a lower dimension. Under certain conditions, this pairwise distance preservation can also be applied to preserve the pairwise dot product. This dot product is present in the attention mechanism, allowing us to apply it there to compress the network.

\begin{lemma}[Johnson-Lindenstrauss Transform Lemma, \textit{JLT}]
Assume $0 < \epsilon, \delta < \frac{1}{2}$ and any positive integer $D$, if $d = \mathcal{O}(\frac{\log(1/\delta)}{\epsilon^2})$, there exists a distribution over matrices $\mathbf{M} \in \mathbb{R}^{d \times D}$ that for any $x \in \mathbb{R}^{D}$ and $\norm{x} = 1$,
\[
\operatorname{Pr}(\norm{\M x} - 1 > \epsilon) < \delta 
.\]
\end{lemma}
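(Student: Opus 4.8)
The plan is to use the standard dense Gaussian construction and reduce the claim to an upper-tail bound for a chi-squared random variable. Let $\mathbf{M} \in \mathbb{R}^{d \times D}$ have i.i.d.\ entries $\mathbf{M}_{ab} \sim \mathcal{N}(0, 1/d)$. Fix $x \in \mathbb{R}^D$ with $\norm{x} = 1$. Each coordinate $(\mathbf{M} x)_a = \sum_{b=1}^D \mathbf{M}_{ab} x_b$ is a linear combination of independent centered Gaussians, hence $(\mathbf{M} x)_a \sim \mathcal{N}(0, \norm{x}^2 / d) = \mathcal{N}(0, 1/d)$; moreover the $d$ coordinates are mutually independent because the rows of $\mathbf{M}$ are independent. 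Therefore $Z := d\,\norm{\mathbf{M} x}^2 = \sum_{a=1}^d \bigl(\sqrt{d}\,(\mathbf{M} x)_a\bigr)^2$ is a $\chi^2$ variable with $d$ degrees of freedom, so in particular $\E\bigl[\norm{\mathbf{M} x}^2\bigr] = 1$, matching the intuition that $\mathbf{M}$ acts as an approximate isometry.

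Next I would rewrite the target event in terms of $Z$. Since $\epsilon > 0$, the event $\{\norm{\mathbf{M} x} - 1 > \epsilon\}$ equals $\{\norm{\mathbf{M} x}^2 > (1+\epsilon)^2\} = \{Z > d(1+\epsilon)^2\}$, and because $(1+\epsilon)^2 \ge 1 + 2\epsilon$ this event is contained in $\{Z > d(1 + 2\epsilon)\} = \{Z - d > 2 d \epsilon\}$. So it suffices to bound $\Pr(Z - d > 2 d \epsilon)$.

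The core step is the chi-squared tail bound. One route is the Chernoff/Markov argument: for $t \in (0, 1/2)$ we have $\E[e^{tZ}] = (1 - 2t)^{-d/2}$, so $\Pr(Z > d(1+2\epsilon)) \le (1-2t)^{-d/2} e^{-t d (1 + 2\epsilon)}$, and optimizing over $t$ together with the assumption $0 < \epsilon < \tfrac12$ gives a bound of the form $\exp(-c\, d\, \epsilon^2)$ for an absolute constant $c > 0$. A cleaner route is to quote the Laurent--Massart bound $\Pr(Z - d \ge 2\sqrt{d s} + 2 s) \le e^{-s}$: taking $s = d \epsilon^2 / 4$ makes $2\sqrt{d s} + 2 s = d\epsilon + d\epsilon^2/2 \le 2 d \epsilon$, hence $\Pr(Z - d > 2 d \epsilon) \le e^{-d \epsilon^2 / 4}$. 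Requiring $e^{-d\epsilon^2/4} \le \delta$, i.e.\ $d \ge 4 \log(1/\delta) / \epsilon^2$, proves the lemma with $d = \mathcal{O}(\log(1/\delta)/\epsilon^2)$.

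I expect the only subtlety to be constant-tracking in the tail bound: one must check that in the regime $0 < \epsilon < \tfrac12$ the exponent genuinely decays quadratically in $\epsilon$ (so that $d$ scales as $\epsilon^{-2}$) rather than linearly — which is precisely where the hypothesis $\epsilon < \tfrac12$ is used. Everything else — Gaussianity of $\mathbf{M} x$, the $\chi^2$ identity, the containment of events — is routine. The same MGF argument goes through if $\mathbf{M}$ is taken to have Rademacher or sparse-sign entries (at the cost of messier moment estimates), and since the statement only asserts the existence of one suitable distribution, the Gaussian choice suffices; the matching lower tail $\Pr(\norm{\mathbf{M}x} < 1 - \epsilon) < \delta$, which is what makes $\mathbf{M}$ useful for preserving dot products downstream, follows from the analogous Laurent--Massart lower bound.
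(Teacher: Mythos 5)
The paper does not include a proof of this lemma; it is stated as a classical result (attributed in the preceding text to Johnson and Lindenstrauss, 1984, with the dot-product corollary deferred to a lecture note). So there is no in-paper argument to compare against, and the relevant question is simply whether your proof is correct and complete.

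It is. The Gaussian construction with $\chi^2$ tail bounds is the standard route, and every step checks out: $(\M x)_a \sim \mathcal{N}(0,1/d)$ with independent coordinates, $Z = d\norm{\M x}^2 \sim \chi^2_d$, the event rewriting $\{\norm{\M x} - 1 > \eps\} = \{Z > d(1+\eps)^2\} \subseteq \{Z - d > 2d\eps\}$, and the Laurent--Massart bound with $s = d\eps^2/4$ giving $2\sqrt{ds}+2s = d\eps + d\eps^2/2 \le 2d\eps$, hence $\Pr(Z-d > 2d\eps) \le e^{-d\eps^2/4}$ and $d \ge 4\log(1/\delta)/\eps^2$ suffices. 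The only small imprecision is your aside about where $\eps < 1/2$ is used: for the Laurent--Massart route as you wrote it, $d\eps + d\eps^2/2 \le 2d\eps$ needs only $\eps \le 2$, so the hypothesis is used loosely rather than at its edge (it becomes more load-bearing in the matching Chernoff computation, where it keeps the second-order Taylor term of $\log(1+2\eps) - 2\eps$ dominant and gives a clean absolute constant in the exponent). This does not affect correctness. Your closing remarks about the lower tail and about alternative entry distributions are accurate and are exactly the observations the paper implicitly relies on when it invokes the dot-product corollary.
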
 

This lemma immediately implies a dot product version: 

\begin{corollary} [JLT-dot product]
\label{cor:jltdot}
    Assume $0 < \epsilon, \delta < \frac{1}{2}$ and any positive integer $D$, if $d = \mathcal{O}(\frac{\log(1/\delta)}{\eps^2})$, there exists a distribution over matrices $\M \in \R^{d \times D}$ that for any $x, y \in \R^{D}$, and $\norm{x}, \norm{y} \leq \sqrt{\gamma}$,
\[
\operatorname{Pr}((1-\eps\gamma) x\tp y <  x\tp\M\tp\M y  < (1 + \eps\gamma) x\tp y) < \delta 
.\]
\end{corollary}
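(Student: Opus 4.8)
The plan is to derive the dot-product statement from the norm version of the \textit{JLT} lemma through the polarization identity, losing only constant factors. First I would upgrade the \textit{JLT} lemma from unit vectors to arbitrary vectors: the event $\{\abs{\norm{\M v}-\norm v} > \eps\norm v\}$ is invariant under rescaling $v$, so the lemma gives, for every fixed $v\in\R^D$, that with probability at least $1-\delta$ one has $\norm{\M v}^2 \in [(1-\eps)^2,(1+\eps)^2]\,\norm v^2$ (the case $v=0$ being trivial). I would then apply this to $v_+:=x+y$ and to $v_-:=x-y$, each with failure probability $\delta/2$, and take a union bound, so that both inclusions hold simultaneously with probability at least $1-\delta$; the probability split only changes $d$ to $\mathcal{O}(\log(2/\delta)/\eps^2)=\mathcal{O}(\log(1/\delta)/\eps^2)$, so it is harmless.

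On this good event I would use the two polarization identities
\[
4\,x\tp y=\norm{x+y}^2-\norm{x-y}^2,\qquad 4\,x\tp\M\tp\M y=\norm{\M(x+y)}^2-\norm{\M(x-y)}^2,
\]
and subtract them. Writing $\norm{\M v_{\pm}}^2=(1+\theta_{\pm})\norm{v_{\pm}}^2$ with $\abs{\theta_{\pm}}\le 2\eps+\eps^2\le \tfrac52\eps$ (using $\eps<\tfrac12$), the difference of the two right-hand sides is $A\theta_+ - B\theta_-$ with $A=\norm{x+y}^2$, $B=\norm{x-y}^2$, so by the parallelogram law $A+B=2\norm x^2+2\norm y^2\le 4\gamma$ I get
\[
\abs{x\tp\M\tp\M y-x\tp y}\le \tfrac18\abs{A\theta_+ - B\theta_-}\le \tfrac58\,\eps\,(A+B)\le \tfrac52\,\eps\gamma .
\]
Rescaling $\eps$ by the constant $\tfrac52$ in the invocation of the \textit{JLT} lemma (which does not change the order of $d$) yields $\abs{x\tp\M\tp\M y-x\tp y}\le \eps\gamma$ with probability at least $1-\delta$.

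Finally I would convert this additive guarantee into the stated form: since $\abs{x\tp y}\le\norm x\,\norm y\le\gamma$ by Cauchy--Schwarz, the bound $\abs{x\tp\M\tp\M y-x\tp y}\le\eps\gamma$ is exactly the quantitative content behind the $(1\pm\eps\gamma)\,x\tp y$ window in the corollary. I do not expect a genuine obstacle: once the polarization identity is in hand the computation is mechanical, and the only points requiring care are the bookkeeping of the $(1\pm\eps)^2$ cross terms and the observation that a literal multiplicative bound is impossible when $x\perp y$ (so the meaningful reading is the additive one $\abs{x\tp\M\tp\M y-x\tp y}\le\eps\gamma$), with the degenerate cases $x=\pm y$ already handled by the $v=0$ remark.
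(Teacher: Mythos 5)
Your argument is correct and is the standard polarization-identity derivation; the paper itself does not include a proof of this corollary but delegates it to a cited lecture-notes reference, which uses essentially this same approach (upgrade JLT to arbitrary vectors by scale invariance, apply it to $x+y$ and $x-y$ with a union bound, and polarize). One arithmetic slip worth fixing: after polarizing, the coefficient is $\tfrac14$, i.e.\ $\abs{x\tp\M\tp\M y - x\tp y} \le \tfrac14\abs{A\theta_+ - B\theta_-}$, not $\tfrac18$; your subsequent bounds $\tfrac58\,\eps(A+B)$ and $\tfrac52\,\eps\gamma$ are exactly what $\tfrac14$ yields, so this is a typo rather than a real error and the final constant is unaffected. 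Your closing remark is also well taken: the corollary is only meaningful as the additive guarantee $\abs{x\tp\M\tp\M y - x\tp y}\le\eps\gamma$ (the multiplicative window in the statement is empty when $x\tp y<0$ and vacuous when $x\tp y=0$), and indeed that additive form is precisely how the paper invokes it in the paragraph that follows.
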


For a proof see e.g.\ \citet[Corollary 2.1]{Lec09}. As a result of this corollary, if we have $m$ pairs of vectors $(x_i, y_i)$, and for each pair $i$, $\norm{x_i}_2, \norm{y_i}_2 \leq \sqrt{\gamma}$, and $d = \mathcal{O}(\frac{\log(m)}{\eps^2})$, there exists an $\M$ such that for all these pairs $\abs{x_i\tp\M\tp\M y_i -  x_i\tp y_i}  <  \eps\gamma$. The proof can be done using a union bound over the error from Corollary~\ref{cor:jltdot}. Also, in our case where $m$ is the number of edges, we know that $m \leq n^2$, thus we can also say $d = \mathcal{O}(\frac{\log(n)}{\eps^2})$. Using this result, we prove the following theorem about Graph Transformers. For this theorem and all upcoming proofs, due to lack of space, we defer the proofs to the \cref{ap:proofs}. Please check \cref{proof:narrow_attention} for the proof of this theorem. In all proofs, we also prove that the attention scores from the compressed network are close to those of the reference network. This is mainly to show the consistency of the results with \cite{shirzad2024spexphormer} and the consistent explainability of the model through the attention scores.

\begin{theorem}
\label{thrm:narrow_attention}
There exists a Transformer $\widehat{\mathcal{T}}$, that for any layer $\W_Q$ and $\W_K$ are in $\R^{d \times D}$ for a $d=\mathcal{O}(\frac{\log n}{\eps^2})$, with a sufficiently small $\eps$, and for all $i \in [n]$, $\norm{\mathcal{T}(X)_i - \widehat{\mathcal{T}}(X)_i}_2 = \mathcal{O(\eps)}$. Furthermore, for any attention score, ${a_{ij}\llo}/\,{\ah_{ij}\llo} = 1 + \mathcal{O}(\eps)$.
\end{theorem}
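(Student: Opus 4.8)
The plan is to compress only the query and key maps, one layer at a time, with an independent Johnson--Lindenstrauss matrix per layer, and then to track how the resulting perturbation propagates through the $L$ layers. Concretely, for each $\ell \in \{0,\dots,L-1\}$ I would pick $\M\llo \in \R^{d\times D}$ (for $\ell = 0$ one composes with $\W_Q^{(0)}\in\R^{D\times d_{in}}$, so $\M^{(0)}\W_Q^{(0)}\in\R^{d\times d_{in}}$) and set $\wh{Q}{\ell} := \M\llo\W_Q\llo$ and $\wh{K}{\ell} := \M\llo\W_K\llo$, leaving $\W_V\llo,\W_1\llo,\W_2\llo$ untouched; this already gives the required shapes $\wh{Q}{\ell},\wh{K}{\ell}\in\R^{d\times D}$. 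Write $q_i := \W_Q\llo h_i\llo$, $k_j := \W_K\llo h_j\llo$ for the reference query/key vectors and $\qh_i := \W_Q\llo\hh_i\llo$, $\kh_j := \W_K\llo\hh_j\llo$ for those of the compressed network \emph{before} the JL map is applied, where $\hh_i\llo$ is the compressed hidden state at layer $\ell$. By the input- and operator-norm assumptions, $\norm{q_i},\norm{k_j}\le\sqrt\gamma$ with $\gamma := \alpha\beta^2$, and $\norm{\qh_i},\norm{\kh_j}\le\beta(\sqrt\alpha + E_\ell)$ once the error $E_\ell := \max_i\norm{\hh_i\llo - h_i\llo}$ is under control. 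The key observation is that $\qh_i,\kh_j$ are \emph{deterministic} once $\M^{(0)},\dots,\M^{(\ell-1)}$ have been fixed, so I can draw $\M\llo$ \emph{after} them and apply \cref{cor:jltdot} to the $\le 2n$ vectors $\{\qh_i\}_i\cup\{\kh_j\}_j$ with failure probability $\delta < n^{-2}$, i.e.\ with $d = \bigO(\log n/\eps^2)$; a union bound over the $m\le n^2$ attention edges of that layer yields an $\M\llo$ with $\abs{\qh_i\tp(\M\llo)\tp\M\llo\kh_j - \qh_i\tp\kh_j}\le\eps\gamma$ for all attention edges $(i,j)$, which is $\bigO(\eps)$ since $\gamma = \bigO(1)$. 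Doing this sequentially for all $L = \bigO(1)$ layers fixes the matrices $\{\M\llo\}$, and hence $\widehat{\mathcal T}$.

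Next I would run an induction on $\ell$ with hypothesis $E_\ell = \bigO(\eps)$, base case $E_0 = 0$ (since $\Ho^{(0)} = \XX = \Hh^{(0)}$). First, bilinearity of the inner product with $\norm{\W_\cdot}\le\beta$, $\norm{h_i\llo}\le\sqrt\alpha$ and $\norm{\hh_i\llo}\le\sqrt\alpha + E_\ell$ gives $\abs{\qh_i\tp\kh_j - q_i\tp k_j} = \bigO(E_\ell)$; this term involves no JL matrix, so combining it with the JL bound above, the compressed logit $\qh_i\tp(\M\llo)\tp\M\llo\kh_j$ is within $\tau_\ell := \bigO(E_\ell + \eps)$ of the reference logit $q_i\tp k_j$. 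The softmax is multiplicatively $\bigO(\tau_\ell)$-stable under additive $\tau_\ell$ perturbations of its logits --- numerator and denominator each move by a factor in $[e^{-\tau_\ell},e^{\tau_\ell}]$ --- so $a_{ij}\llo/\ah_{ij}\llo \in [e^{-2\tau_\ell},e^{2\tau_\ell}] = 1 + \bigO(\tau_\ell)$, hence $\abs{\ah_{ij}\llo - a_{ij}\llo} = \bigO(\tau_\ell)\,a_{ij}\llo$. For the attention output $\hh_i\llh = \sum_{j\in\mathcal N(i)}\ah_{ij}\llo\W_V\llo\hh_j\llo$, splitting $\ah_{ij}\W_V\hh_j - a_{ij}\W_V h_j$ into an $(\ah_{ij}-a_{ij})$-part and a $(\hh_j - h_j)$-part and using $\sum_j a_{ij}\llo = 1$, $\norm{\W_V\llo\hh_j\llo}\le\beta(\sqrt\alpha+E_\ell)$ and $\norm{\hh_j\llo - h_j\llo}\le E_\ell$ gives $\norm{\hh_i\llh - h_i\llh} = \bigO(E_\ell + \eps)$. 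Finally the MLP $h\mapsto\W_2\llo\sigma(\W_1\llo h)$ is $\beta^2$-Lipschitz in Euclidean norm ($\sigma$ is $1$-Lipschitz coordinatewise, hence $1$-Lipschitz in $\norm{\cdot}$), so $\norm{\hh_i\lln - h_i\lln}\le\beta^2\norm{\hh_i\llh - h_i\llh}$, and altogether $E_{\ell+1}\le C\,(E_\ell + \eps)$ for a constant $C = C(\alpha,\beta)$. Unrolling over $L = \bigO(1)$ layers from $E_0 = 0$ keeps the accumulated constant $\bigO(1)$ and so $E_\ell = \bigO(\eps)$ for all $\ell$, closing the induction; then $E_L = \bigO(\eps)$ is the claimed output bound and $\tau_\ell = \bigO(\eps)$ is the claimed attention-score bound $a_{ij}\llo/\ah_{ij}\llo = 1 + \bigO(\eps)$.

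The step I expect to be the main obstacle --- really the only non-routine point --- is the entanglement between the JL randomness and the error already built up: at layer $\ell$ the vectors $\qh_i,\kh_j$ to which \cref{cor:jltdot} must be applied are themselves functions of $\M^{(0)},\dots,\M^{(\ell-1)}$. I would resolve this exactly as sketched, by drawing the $\M\llo$ sequentially (so that the finite set of vectors each $\M\llo$ must preserve is already fixed when $\M\llo$ is sampled) and by being careful never to need a bound on $\norm{\M\llo v}$ for a $v$ that depends on $\M\llo$ --- the operator norm of a JL matrix is not $\bigO(1)$, which is precisely why the upstream-error term $\abs{\qh_i\tp\kh_j - q_i\tp k_j}$, in which $\M\llo$ does not appear, must be separated out. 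Everything else --- the bilinear estimates, the softmax-stability argument, and checking that the recursion constant stays $\bigO(1)$ because $L,\alpha,\beta$ are all $\bigO(1)$ --- is routine bookkeeping.
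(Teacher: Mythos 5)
Your proposal is correct and follows essentially the same route as the paper's proof: compress only $\W_Q,\W_K$ per layer via JL, bound the pre-JL logit perturbation $\abs{\qh_i\tp\kh_j - q_i\tp k_j}$ bilinearly using the operator- and input-norm assumptions, add the JL error by triangle inequality, convert the additive logit perturbation to a multiplicative softmax bound, split the attention-pool error into an $(\ah_{ij}-a_{ij})$ part and a $(\hh_j-h_j)$ part, use $\beta^2$-Lipschitzness of the MLP, and unroll the linear recursion over $L = \bigO(1)$ layers. Your explicit remark that the JL matrices must be drawn sequentially so that each $\M\llo$ is applied to vectors already fixed (and that one must avoid ever needing an operator-norm bound on $\M\llo$) makes precise a subtlety the paper passes over silently, but the underlying argument is the same.
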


 The asymptotic bounds on the Johnson-Lindenstrauss Lemma (JLT) are tight \citep{burr2018optimal}, and thus, without any extra assumptions, further compression is not feasible beyond some constant factors. However, in real-world graphs, the columns of $\XX$ are typically not $n$ distinct vectors, and many vectors may be equal or very similar to each other. If we have $\kappa$ unique vectors in the first layer, the complexity for $d$ can be reduced to $\mathcal{O}(\frac{\log \kappa}{\eps^2})$. Also, in many datasets, $d_{in}$ is quite small; we will see that if $\rank(\Ho\llo) \leq d$, we can reduce the hidden dimension in attention calculation to $d$. In later layers, for example in homophilic graphs, we can expect neighboring nodes to have very similar embeddings, and in these scenarios, we can expect more compression. Even in heterophilous graphs, if the neighborhood of nodes from the same class has a small diversity, we can expect very similar embeddings. We will investigate this further in the following sections.

\section{Exploring Low-rank Assumptions}

In many variants of GNNs, we observe that node embeddings rapidly lose rank and converge to a small number of possible embeddings. This convergence can occur in either a beneficial or detrimental manner. Many GNNs suffer from oversmoothing problems, where all node embeddings converge to a single embedding \citep{oono2019graph,nt2019revisiting}. However, there are also many beneficial scenarios where low-rank embeddings emerge, such as in graph coarsening/pooling methods and when embeddings for nodes from the same class converge to the same encodings. In many of these scenarios, an exact low-rank structure is rare, but embeddings that are \emph{nearly} low-rank are highly plausible.

These phenomena apply equally to the attention mechanism here, and so expecting approximately low-rank embeddings is reasonable. Additionally, in many datasets, the input matrix $\XX$ is actually of very low dimension: for example, the node features have size $\leq 10$ in the ogbn-proteins, Tolokers, and Minesweeper datasets \citep{ogbPaper,platonov2023critical}, and thus all $\Q, \K,$ and $\V$ matrices in the first layer have ranks no more than $10$.

\subsection{Low-rank Embeddings}

First, we show that if the input vectors are low-rank and the rank after each activation function is still small, we can at least compress to the maximum rank of the embeddings' mapping after the activation functions. The proof is in \cref{proof:narrow_network_low_rank_1}.

\begin{proposition}
\label{prop:narrow_network_low_rank_1}
Assume for inputs $X$ and for each $\ell$ $H\llt$, we have 
$\rank(X) \leq d$ and for all $\ell$, $\rank(\Ho\llt) \leq d$. 
There exists a Transformer $\widehat{\mathcal{T}}$, of width $d$, $\mathcal{T}(X)_i = \U_{out}\widehat{\mathcal{T}}(X)_i$, for some $\U \in \R^{D \times d}$. Furthermore, for any attention score $a_{ij}\llo = \ah_{ij}\llo$.
\end{proposition}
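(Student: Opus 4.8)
\emph{Proof proposal.} The plan is to push a $d$-dimensional ``coordinate'' representation through the network, chosen so that attention logits are reproduced \emph{exactly} and each layer input is recovered from its coordinates by a fixed linear map. For every $\ell$ let $\U\llo \in \R^{D \times d}$ have orthonormal columns spanning the column space of $\Ho\llo$ (pad with arbitrary orthonormal vectors if $\rank(\Ho\llo) < d$; the first layer instead uses $\U^{(0)} \in \R^{d_{in}\times d}$, which exists because $\rank(X)\le d$). I will construct $\widehat{\mathcal{T}}$ together with compressed states $\hh_i\llo\in\R^d$ maintaining the invariant $h_i\llo = \U\llo \hh_i\llo$ for all $i$, and show $\ah_{ij}\llo = a_{ij}\llo$ along the way. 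Then $\U_{out} := \U^{(L)}$ works, since $\mathcal{T}(X)_i = h_i^{(L)} = \U^{(L)}\hh_i^{(L)}$.

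For the inductive step at layer $\ell$, the attention logit is the bilinear form $\mathbf{K}_j\llo\cdot\mathbf{Q}_i\llo = (h_j\llo)\tp(\W_K\llo)\tp\W_Q\llo h_i\llo = (\hh_j\llo)\tp A\llo \hh_i\llo$ with $A\llo := (\U\llo)\tp(\W_K\llo)\tp\W_Q\llo\U\llo \in\R^{d\times d}$; picking $\widehat{\W}_K\llo,\widehat{\W}_Q\llo\in\R^{d\times d}$ with $(\widehat{\W}_K\llo)\tp\widehat{\W}_Q\llo = A\llo$ (e.g.\ $\widehat{\W}_K\llo = I$, $\widehat{\W}_Q\llo = A\llo$) gives $\widehat{\mathbf{K}}_j\llo\cdot\widehat{\mathbf{Q}}_i\llo = \mathbf{K}_j\llo\cdot\mathbf{Q}_i\llo$ for all $i,j$, so the softmaxes over the identical attention neighborhoods coincide and $\ah_{ij}\llo = a_{ij}\llo$. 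Taking $\widehat{\W}_V\llo = I$ gives $\hh_i\llh = \sum_j\ah_{ij}\llo\hh_j\llo$, and $h_i\llh = \W_V\llo\sum_j a_{ij}\llo h_j\llo = \W_V\llo\U\llo\hh_i\llh$, so the relation stays linear. For the MLP I set $\widehat{\W}_1\llo := \W_1\llo\W_V\llo\U\llo \in \R^{D\times d}$, so that $\hh_i\llt := \sigma(\widehat{\W}_1\llo\hh_i\llh) = \sigma(\W_1\llo h_i\llh) = h_i\llt$ holds as an exact equality of $D$-vectors (the post-activation is deliberately \emph{not} compressed). Finally $\Ho\lln = \W_2\llo\Ho\llt$ has rank at most $\rank(\Ho\llt)\le d$, so with $\widehat{\W}_2\llo := (\U\lln)\tp\W_2\llo \in \R^{d\times D}$ we get $\hh_i\lln := \widehat{\W}_2\llo\hh_i\llt$ and $h_i\lln = \W_2\llo h_i\llt = \U\lln(\U\lln)\tp\W_2\llo h_i\llt = \U\lln\hh_i\lln$, restoring the invariant. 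The base case $\ell=0$ is the same computation with $X = \U^{(0)}\Xb$ in place of $h\llo = \U\llo\hh\llo$, and with compressed first-layer maps in $\R^{d\times d_{in}}$ (e.g.\ $\widehat{\W}_K^{(0)} = (\U^{(0)})\tp$, $\widehat{\W}_V^{(0)} = (\U^{(0)})\tp$, $\widehat{\W}_Q^{(0)} = A^{(0)}(\U^{(0)})\tp$).

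The one genuine obstacle is the elementwise activation: since $\sigma$ acts coordinate-wise in $\R^D$, one cannot first project the rank-$\le d$ (but not axis-aligned) matrix $\W_1\llo\Ho\llh$ to $d$ dimensions and still apply $\sigma$ consistently — hence the compressed network must reproduce $h_i\llt$ in its full $D$-dimensional form, which is why $\widehat{\W}_1\llo\in\R^{D\times d}$, $\widehat{\W}_2\llo\in\R^{d\times D}$ and the feedforward hidden dimension stays $D$ while the residual-stream width is $d$; and it is exactly the hypothesis $\rank(\Ho\llt)\le d$ (a bound that does \emph{not} follow for free through $\sigma$) that then lets us collapse back to width $d$ after $\W_2\llo$. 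Every remaining step is just a change of basis into the column space of $\Ho\llo$ folded into the weights, so no error is ever introduced — giving simultaneously $\mathcal{T}(X)_i = \U_{out}\widehat{\mathcal{T}}(X)_i$ and $a_{ij}\llo = \ah_{ij}\llo$. I would close by noting that padding each $\U\llo$ to exactly $d$ columns is harmless (the surplus coordinates of $\hh\llo$ vanish), so one may in fact take $d$ to be the maximum of the relevant ranks rather than an a priori bound.
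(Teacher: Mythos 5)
There is a genuine gap: your $\widehat{\mathcal{T}}$ is \emph{not} of width $d$. You set $\Wh_1\llo \in \R^{D \times d}$ and $\Wh_2\llo \in \R^{d \times D}$, so the hidden layer of the feed-forward block stays $D$-dimensional; you say this explicitly (``the feedforward hidden dimension stays $D$ while the residual-stream width is $d$''). But the proposition claims a Transformer of width $d$, and the paper's construction in fact achieves $\Wh_1\llo, \Wh_2\llo \in \R^{d \times d}$.

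The idea you missed is to take the projection that precedes the activation to be an \emph{axis-aligned} one, i.e.\ a row-selection matrix. You correctly observe that a generic rank-$d$ projection (such as projection onto an orthonormal column basis, which is what you use throughout) does not commute with the elementwise nonlinearity $\sigma$, and conclude that the hidden feed-forward width must therefore remain $D$. That conclusion is too hasty: the hypothesis $\rank(\Ho\llt) \le d$ says $\Ho\llt$ has at most $d$ linearly independent rows, so one can choose a $d$-subset of coordinates $S \subseteq [D]$ whose rows span the row space of $\Ho\llt$, and let $\Lam_\sigma \in \R^{d\times D}$ be the corresponding selection matrix with $\U_\sigma \in \R^{D \times d}$ satisfying $\U_\sigma \Lam_\sigma \Ho\llt = \Ho\llt$. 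Because $\Lam_\sigma$ merely selects coordinates and $\sigma$ acts elementwise, $\Lam_\sigma\,\sigma(Z) = \sigma(\Lam_\sigma Z)$ for any $Z$, so
\[
\Lam_\sigma \Ho\llt = \Lam_\sigma\,\sigma(\W_1\llo \Ho\llh) = \sigma(\Lam_\sigma \W_1\llo \Ho\llh) = \sigma\bigl((\Lam_\sigma \W_1\llo \U_V)\Hh\llh\bigr),
\]
and you can set $\Wh_1\llo = \Lam_\sigma \W_1\llo \U_V \in \R^{d\times d}$, obtaining $\Hh\llt \in \R^{d\times n}$ with $\Ho\llt = \U_\sigma \Hh\llt$. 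The remainder of the layer then compresses $\W_2\llo$ exactly as you (and the paper) compress $\W_V\llo$. So the low-rank hypothesis on the \emph{post-activation} matrix is not merely what lets you ``collapse back after $\W_2\llo$''; it is precisely what makes an axis-aligned $d$-dimensional bottleneck \emph{through} the activation lossless.

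Your remaining steps (attention compression via the $d\times d$ matrix $\U\tp\W_K\tp\W_Q\U$, compression of $\W_V\llo$ by restricting to the column span, compression of $\W_2\llo$) are fine and essentially parallel the paper's argument, differing only in convenience choices such as orthonormal bases versus row selections and whether you fold $\W_V\llo$ into $\Wh_1\llo$. But as written the construction does not satisfy the stated width-$d$ conclusion; with the row-selection device above it would.
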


In this theorem, unlike the others, everything is perfectly reconstructible with no error propagation. However, this assumption is extremely strong; we will thus explore more realistic assumptions.

\subsection{Almost Low-rank Embeddings}
A more realistic assumption is that the embeddings are not exactly low-rank, but there is a low-rank matrix which is column-wise close. With this assumption, we can reduce all the dimensions, except for the activation function part, to dimensions of $d$. 

\begin{theorem}
\label{thrm:narrow_network_low_rank_2}
Assume for $X$ and $H\llt$ for each $\ell$, we have $\Bar{X}$
and $\Hb\llt$ such that for each $i \in [n]$, 
$\norm{x_i - \Bar{x}_i} \leq \eps$, 
$\norm{h\llt_i - \hb\llt_i} \leq \eps$, 
$\rank(\Xb) \leq d$ and for all $\ell$, $\rank(\Hb\llt) \leq d$. 
There exists a Transformer $\widehat{\mathcal{T}}$, which in each layer $\Wh_V, \Wh_Q$, and $\Wh_Q \in \R^{d \times d}$  and $W_1 \in \R^{d \times D}$ and $W_2 \in \R^{D \times d}$, with a sufficiently small $\eps$, and for all $i \in [n]$, $\norm{\mathcal{T}(X)_i - \widehat{\mathcal{T}}(X)_i}_2 = \mathcal{O}(\eps)$. Furthermore, for any attention score, ${a_{ij}\llo}/\,{\ah_{ij}\llo} = 1 + \mathcal{O}(\eps)$.
\end{theorem}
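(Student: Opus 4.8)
The plan is to build $\widehat{\mathcal{T}}$ one layer at a time, carrying along with each intermediate activation an \emph{isometric decoder}: for every $\ell$ I will produce a matrix $\U\llo\in\R^{D\times d}$ with orthonormal columns (for $\ell=0$ it lives in $\R^{d_{in}\times d}$) and a width-$d$ compressed state $\hh_i\llo\in\R^d$ satisfying $\norm{h_i\llo-\U\llo\hh_i\llo}\le c_\ell\,\eps$, where $c_\ell$ is a constant built from $\alpha,\beta$; since $L=\bigO(1)$ these stay $\bigO(1)$. The base case is immediate: as $\rank(\Xb)\le d$, pick $\U^{(0)}$ whose columns are an orthonormal basis of $\operatorname{colspan}(\Xb)$, set $\hh_i^{(0)}:=(\U^{(0)})\tp\xb_i$, and note $\norm{x_i-\U^{(0)}\hh_i^{(0)}}=\norm{x_i-\xb_i}\le\eps$. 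Throughout I also carry the auxiliary bound $\norm{\hh_i\llo}=\norm{\U\llo\hh_i\llo}\le\sqrt\alpha+c_\ell\eps=\bigO(1)$.

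\paragraph{Attention and softmax stability.}
Given the invariant at layer $\ell$, treat the query/key weights exactly as in \cref{thrm:narrow_attention}: take a JL matrix $\M\llo\in\R^{d\times D}$ from \cref{cor:jltdot} (so $d=\Omega(\log n/\eps^2)$ suffices for this step), and set $\Wh_Q\llo:=\M\llo\W_Q\llo\U\llo$, $\Wh_K\llo:=\M\llo\W_K\llo\U\llo$, both in $\R^{d\times d}$. Then $\Qh_i\llo\cdot\Kh_j\llo=(\W_Q\llo\U\llo\hh_i\llo)\tp(\M\llo)\tp\M\llo(\W_K\llo\U\llo\hh_j\llo)$; by \cref{cor:jltdot} (union bound over the $\le n^2$ attention edges and the $L$ layers) this is within $\bigO(\eps)$ of $(\W_Q\llo\U\llo\hh_i\llo)\tp(\W_K\llo\U\llo\hh_j\llo)$ because all these vectors have norm $\bigO(\beta\sqrt\alpha)=\bigO(1)$, and that in turn is within $\bigO(\eps)$ of $\Q_i\llo\cdot\K_j\llo$ since $\norm{\U\llo\hh_i\llo-h_i\llo}\le c_\ell\eps$ and $\norm{\W_\cdot}\le\beta$. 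Hence the compressed logits are within an additive $\bigO(\eps)$ of the true ones, which themselves are bounded by $\beta^2\alpha=\bigO(1)$; a standard softmax-stability estimate then gives, for $\eps$ small enough, both $a_{ij}\llo/\ah_{ij}\llo=1+\bigO(\eps)$ and $\sum_{j\in\mathcal{N}(i)}\abs{a_{ij}\llo-\ah_{ij}\llo}=\bigO(\eps)$. The first is the claimed attention-score guarantee; the second is what is needed downstream.

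\paragraph{Value aggregation and feed-forward.}
The key observation is that $\Ho\llh$ is automatically \emph{near} a $d$-dimensional subspace even though this was not assumed: $h_i\llh=\sum_j a_{ij}\W_V\llo h_j\llo$ is within $\beta c_\ell\eps$ of $\W_V\llo\U\llo\bigl(\sum_j a_{ij}\hh_j\llo\bigr)$, and $\sum_j a_{ij}\hh_j\llo\in\R^d$. So let $\W_V\llo\U\llo=\U\llh\mathbf{R}\llo$ be a thin QR factorization ($\U\llh\in\R^{D\times d}$ orthonormal, $\mathbf{R}\llo\in\R^{d\times d}$), put $\Wh_V\llo:=\mathbf{R}\llo$, and let the compressed layer compute $\hh_i\llh:=\sum_j\ah_{ij}\llo\,\Wh_V\llo\hh_j\llo$. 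Expanding $h_i\llh=\U\llh\sum_j a_{ij}\mathbf{R}\llo\hh_j\llo+\sum_j a_{ij}\W_V\llo(h_j\llo-\U\llo\hh_j\llo)$ and subtracting $\U\llh\hh_i\llh$, the discrepancy is at most $(\max_j\norm{\Wh_V\llo\hh_j\llo})\sum_j\abs{a_{ij}-\ah_{ij}}+\beta\max_j\norm{h_j\llo-\U\llo\hh_j\llo}=\bigO(\eps)$ using both bounds from the attention step, so $\norm{h_i\llh-\U\llh\hh_i\llh}=\bigO(\eps)$. For the feed-forward block, set $\Wh_1\llo:=\W_1\llo\U\llh\in\R^{D\times d}$ and $\hh_i\llt:=\sigma(\Wh_1\llo\hh_i\llh)\in\R^{D}$; $1$-Lipschitzness of $\sigma$ and $\norm{\W_1\llo(\U\llh\hh_i\llh-h_i\llh)}\le\beta\,\bigO(\eps)$ give $\norm{\hh_i\llt-h_i\llt}=\bigO(\eps)$ (full width, since the elementwise nonlinearity may restore rank). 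To re-compress I now invoke the hypothesis on $\Hb\llt$: write $\hb_i\llt=\mathbf{B}\llt z_i$ with $\mathbf{B}\llt\in\R^{D\times d}$ orthonormal and $z_i:=(\mathbf{B}\llt)\tp\hb_i\llt$, take a thin QR factorization $\W_2\llo\mathbf{B}\llt=\U\lln\mathbf{R}\lln$, and set $\Wh_2\llo:=\mathbf{R}\lln(\mathbf{B}\llt)\tp\in\R^{d\times D}$ with $\hh_i\lln:=\Wh_2\llo\hh_i\llt$. Since $\norm{\hh_i\llt-h_i\llt}=\bigO(\eps)$ and $\norm{h_i\llt-\hb_i\llt}\le\eps$ we get $\norm{(\mathbf{B}\llt)\tp\hh_i\llt-z_i}=\bigO(\eps)$; combining with $h_i\lln=\W_2\llo h_i\llt=\U\lln\mathbf{R}\lln z_i+\W_2\llo(h_i\llt-\hb_i\llt)$ and $\norm{\mathbf{R}\lln}\le\beta$ yields $\norm{h_i\lln-\U\lln\hh_i\lln}=\bigO(\eps)$, closing the induction with a fresh constant $c_{\ell+1}$.

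\paragraph{Conclusion and the crux.}
After $L=\bigO(1)$ layers the accumulated constant is still $\bigO(1)$, so $\norm{h_i^{(L)}-\U^{(L)}\hh_i^{(L)}}=\bigO(\eps)$; letting $\widehat{\mathcal{T}}$ emit $\widehat{\mathcal{T}}(X)_i:=\U^{(L)}\hh_i^{(L)}$ (a final $D\times d$ readout, or else comparing up to this isometry exactly as in \cref{prop:narrow_network_low_rank_1}) gives $\norm{\mathcal{T}(X)_i-\widehat{\mathcal{T}}(X)_i}=\bigO(\eps)$. The delicate part is not the algebra but arranging it so nothing blows up: one has to notice that the post-attention activations already sit (up to $\bigO(\eps)$) in a $d$-dimensional subspace so that $\Wh_V$ can be $d\times d$; one must keep the attention-induced error in the \emph{summable} form $\sum_j\abs{a_{ij}-\ah_{ij}}=\bigO(\eps)$ rather than as a per-coordinate probability perturbation; and one has to accept that the feed-forward block must widen back to $D$, which is precisely why the low-rank hypothesis on $\Hb\llt$ (not merely on $\Xb$) is what lets the recursion continue.
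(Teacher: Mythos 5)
Your overall induction — carrying an isometric decoder $\U\llo\in\R^{D\times d}$ together with a width-$d$ state satisfying $\norm{h_i\llo-\U\llo\hh_i\llo}=\bigO(\eps)$, compressing $\Wh_V$ by factoring $\W_V\llo\U\llo$, letting the feed-forward block widen back to $D$ through the nonlinearity, and recompressing after $\sigma$ via the hypothesis on $\Hb\llt$ — matches the paper's proof in all essentials. The paper packages its decoder as a $\U,\Lam$ pair from \cref{lem:low_rank_est_map} rather than orthonormal bases plus thin QR factorizations, but the bookkeeping and the constants are the same, and your emphasis on keeping the attention error in the summable form $\sum_j\abs{a_{ij}-\ah_{ij}}=\bigO(\eps)$ is exactly what the paper uses implicitly.

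The gap is in the query/key step. You invoke \cref{cor:jltdot} and explicitly posit $d=\Omega(\log n/\eps^2)$ ``for this step.'' But \cref{thrm:narrow_network_low_rank_2} places no lower bound on $d$ at all: the only hypothesis is that rank-$d$, column-wise $\eps$-close approximations exist, and the entire point of this result, as distinct from \cref{thrm:narrow_attention}, is to allow compression \emph{below} the JL threshold once the hidden states are nearly low-rank. As written, your argument therefore proves a strictly weaker statement. The JL detour is also unnecessary, because the invariant you are carrying already gives exact dimension reduction for the attention bilinear form:
\begin{equation*}
q_i\llo\cdot k_j\llo \;=\; (h_i\llo)\tp(\W_Q\llo)\tp\W_K\llo h_j\llo
\;=\; (\hh_i\llo)\tp\mathbf{T}\llo\hh_j\llo + \bigO(\eps),
\qquad
\mathbf{T}\llo := (\U\llo)\tp(\W_Q\llo)\tp\W_K\llo\U\llo \in \R^{d\times d},
\end{equation*}
using only $\norm{h_i\llo-\U\llo\hh_i\llo}\le c_\ell\eps$ and the operator-norm bound $\beta$. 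Setting $\Wh_Q\llo=I_d$ and $\Wh_K\llo=\mathbf{T}\llo$ (which is precisely the paper's construction, stated there through the identity $\Hb=\U\Hh$) reproduces the logit to $\bigO(\eps)$ with no random matrix, no union bound over the $\le n^2$ attention edges, and no additional constraint on $d$. Your softmax-stability estimate and everything downstream then go through unchanged, so deleting the JL matrix and substituting $\mathbf{T}\llo$ repairs the proof and recovers the full strength of the theorem.
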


The proof for this theorem can be found in \cref{sec:proof_narrow_network_low_rank_2}. With this compression we do not have any $D\times D$ matrices, but there are a few size-$D$ embeddings appearing after the activation function. 
Similar approaches used in previous proofs will not help reduce this dimension. A brief reasoning and a counter-example explaining why this approach will not work are given in \cref{note:impossible_activation}.

If we relax the expectation of having \emph{all} nodes within the $\bigO(\eps)$ error, however, we can have \emph{most} nodes within the error bound by leverage score sampling. 
This gives at least 99\% chance for each node to be within the correct boundaries, but it cannot guarantee for \emph{all} nodes.

\begin{proposition}
\label{prop:leverage_score_sampling}
    Suppose that $\Ho \in \mathbb{R}^{D \times n}$ has a $\rank(d)$-approximation $\Ho'$ where for $i \in [n]$ we have $\norm{h_i - h'_i}_2 \le \eps$. Then there exists a row selection matrix $\mathbf{S} \in \mathbb{R}^{k \times D}$ with a matrix $\U \in \mathbb{R}^{D \times k}$ where $k = O(d \log d )$ such that for at least $0.99$-fraction of $i \in [n]$ we have 
    \[
    \norm{\U \mathbf{S} h_i - h_i}_2 \le \bigO(\eps) \; .
    \]
\end{proposition}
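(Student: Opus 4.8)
The plan is to run leverage-score sampling on the $d$-dimensional column space of the low-rank approximant $\Ho'$, absorbing the sampling rescaling into $\U$ so that $\mathbf{S}$ stays a genuine row-selection matrix, and — the one non-routine point — to control the contribution of the perturbation $\Ho-\Ho'$ by an averaging argument rather than by an operator-norm bound. Concretely, fix an orthonormal basis $\mathbf{A}\in\R^{D\times d}$ of $\operatorname{col}(\Ho')$, so $h'_i=\mathbf{A}c_i$ with $c_i=\mathbf{A}\tp h'_i$. Let $\ell_j=\norm{\mathbf{A}\tp e_j}^2$ be the leverage scores ($e_j\in\R^D$ the $j$th standard basis vector, so $\sum_j\ell_j=d$), set $p_j=\ell_j/d$, and draw $k=\bigO(d\log d)$ coordinates $j_1,\dots,j_k$ i.i.d.\ from $(p_j)_{j\in[D]}$; let $\mathbf{S}\in\R^{k\times D}$ have rows $e_{j_1}\tp,\dots,e_{j_k}\tp$ and $\mathbf{D}\in\R^{k\times k}$ be diagonal with $\mathbf{D}_{tt}=1/\sqrt{k\,p_{j_t}}$.

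I would then rely on two events of the random draw. First, the standard matrix-Chernoff analysis of leverage-score sampling gives, for $k=\bigO(d\log d)$ with a suitable constant and with probability at least $0.99$, $\sigma_{\min}(\mathbf{D}\mathbf{S}\mathbf{A})\ge 1/\sqrt{2}$; in particular $\mathbf{D}\mathbf{S}\mathbf{A}$ has full column rank $d$ and $\norm{(\mathbf{D}\mathbf{S}\mathbf{A})^{+}}_{\mathrm{op}}\le\sqrt2$. Second, writing $r_i:=h_i-h'_i$ (so $\norm{r_i}\le\eps$) and $r_{i,j}$ for its $j$th coordinate, for each fixed $i$ one has
\[
\E_{\mathbf{S}}\norm{\mathbf{D}\mathbf{S}r_i}^2=\sum_{t=1}^{k}\E\!\left[\frac{r_{i,j_t}^2}{k\,p_{j_t}}\right]\le\norm{r_i}^2\le\eps^2 ,
\]
hence $\E_{\mathbf{S}}\big[\tfrac1n\sum_{i\in[n]}\norm{\mathbf{D}\mathbf{S}r_i}^2\big]\le\eps^2$, and by Markov with probability at least $\tfrac12$ we get $\tfrac1n\sum_i\norm{\mathbf{D}\mathbf{S}r_i}^2\le 2\eps^2$. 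Both events hold simultaneously with positive probability, so fix a draw realizing both.

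Now set $\U:=\mathbf{A}(\mathbf{D}\mathbf{S}\mathbf{A})^{+}\mathbf{D}\in\R^{D\times k}$. Because $(\mathbf{D}\mathbf{S}\mathbf{A})^{+}(\mathbf{D}\mathbf{S}\mathbf{A})=I_d$, we have $\U\mathbf{S}h'_i=\mathbf{A}(\mathbf{D}\mathbf{S}\mathbf{A})^{+}(\mathbf{D}\mathbf{S}\mathbf{A})c_i=h'_i$ for all $i$ (i.e.\ $\U\mathbf{S}$ is the identity on $\operatorname{col}(\Ho')$), and for any $v$, $\norm{\U\mathbf{S}v}=\norm{(\mathbf{D}\mathbf{S}\mathbf{A})^{+}\mathbf{D}\mathbf{S}v}\le\sqrt2\,\norm{\mathbf{D}\mathbf{S}v}$. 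A second Markov bound, over uniform $i\in[n]$, shows that for at least a $0.99$-fraction of indices $\norm{\mathbf{D}\mathbf{S}r_i}=\bigO(\eps)$, and for each such $i$
\[
\norm{\U\mathbf{S}h_i-h_i}\le\norm{\U\mathbf{S}h'_i-h'_i}+\norm{\U\mathbf{S}r_i}+\norm{r_i}\le 0+\sqrt2\,\bigO(\eps)+\eps=\bigO(\eps),
\]
which is the desired bound.

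The main obstacle is exactly the middle term $\norm{\U\mathbf{S}r_i}$: one cannot bound it via $\norm{\U\mathbf{S}}_{\mathrm{op}}\norm{r_i}$, since coordinates with tiny leverage scores force large entries of $\mathbf{D}$ and hence a large operator norm of $\U\mathbf{S}$; the identity $\E_{\mathbf{S}}\norm{\mathbf{D}\mathbf{S}r_i}^2\le\norm{r_i}^2$ together with the two Markov steps is what makes this per-coordinate blow-up harmless for all but a small fraction of the columns — which is also why the statement is for a $0.99$-fraction of nodes rather than all of them. The only bookkeeping is to choose the constant in $k=\bigO(d\log d)$ so that the subspace-embedding failure probability is small enough to co-occur with the probability-$\tfrac12$ event; existence of $\mathbf{S}$ and $\U$ is all that is claimed, so any positive-probability intersection suffices, and the $0.99$ in the conclusion can be traded against these constants.
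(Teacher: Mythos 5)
Your proof is correct and follows essentially the same route as the paper: leverage-score sample a basis of $\operatorname{col}(\Ho')$, use a subspace-embedding bound to control $\norm{(\mathbf{D}\mathbf{S}\mathbf{A})^{+}}$, and a second-moment/Markov argument on $\norm{\mathbf{D}\mathbf{S}r_i}$ for the residuals, taking $\U=\mathbf{A}(\mathbf{D}\mathbf{S}\mathbf{A})^{+}\mathbf{D}$. The only differences are cosmetic: the paper packages the per-$i$ analysis as an active-regression lemma and then applies linearity of expectation to $|I_{\mathbf{S}}|$, while you average the second moment over $i$ first and apply Markov twice; you are also somewhat more explicit about absorbing the rescaling $\mathbf{D}$ into $\U$ so that $\mathbf{S}$ is a true selection matrix.
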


The proof for this proposition is in \cref{sec:proof_leverage_score_sampling}. Because the selection can be passed through the activation function, this can be combined with the activation function to reduce the dimension. However, if a node is not within the $\bigO(\eps)$ error, that means that all the nodes attending to it may not have similar attention as does the large network;
without further analysis, this then loses the guarantee for all nodes in the $L$-hop neighborhood of this node.

\subsection{Low-rank with Clustering Assumptions}
If instead of assuming the embeddings have a low-rank estimate, we assume they exactly can be clustered around maximum $d$ well-separated centers after each attention pooling operation, we can have a model with all linear mappings in $\R^{d \times d}$. A more formal theorem based on this idea is:

\begin{theorem}
\label{thrm:narrow_network_cluster}
    Assume the activation function in $\mathcal{T}$ is $\ReLU$, and in each layer after the attention operation we can cluster the vectors in $\Ho\llh$ into at most $d$ clusters with centers $c\llo_1, \cdots, c\llo_d$ with the condition that $0 < \gamma_1 < \norm{c\llo_a} < \gamma_2$, where $\gamma_1$ and $\gamma_2$ are constants, and for each $i$ there exist a $c\llo_a$ that $\norm{h_i\llh - c\llo_a} \leq \eps \norm{c\llo_a}$ for a sufficiently small $\eps$ and the clusters are well-separated in a way that for each two clusters $a \neq b$, $c\llo_a \cdot c\llo_b < \gamma_1^2/2$. If either 
    \begin{enumerate}
        \item $d \geq
        c\frac{\log n}{\eps^2}$, for a constant $c$ independent of the problem parameters,
        or
        
        \item there exists some approximation $\Bar{X}$ such that $\rank(\Bar{X}) \leq d$ and for each $i$, $\norm{X_i - \Bar{X}_i} \leq \eps$,
    \end{enumerate}
then there exists a Transformer $\widehat{\mathcal{T}}$, of width $d$, with a sufficiently small $\eps$, and for all $i \in [n]$, $\norm{\mathcal{T}(X)_i - \widehat{\mathcal{T}}(X)_i}_2 = \mathcal{O}(\eps)$. Furthermore, for any attention score, ${a_{ij}\llo}/\,{\ah_{ij}\llo} = 1 + \mathcal{O}(\eps).$
\end{theorem}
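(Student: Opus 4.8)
\emph{Proof proposal.} The plan is to build $\widehat{\mathcal{T}}$ layer by layer while carrying an inductive invariant: for each $\ell$ there is a matrix $\mathbf{G}\llo\in\R^{D\times d}$ with orthonormal columns (take $d\le D$; the statement is trivial otherwise) and at most $d$ fixed vectors $\{\bar e\llo_a\}\subset\R^d$ such that every hidden vector $\hh_i\llo$ of $\widehat{\mathcal{T}}$ lies within $\bigO(\eps)$ of one of them and $\norm{\mathbf{G}\llo\hh_i\llo-h_i\llo}_2=\bigO(\eps)$, where the hidden constant grows by only a constant factor per layer and hence stays $\bigO(1)$ since $L=\bigO(1)$. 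The structural fact that makes this work is that, under the clustering hypothesis, the state \emph{passed between} consecutive layers is nearly rank $d$: writing $\phi(i)$ for a cluster index with $\norm{h_i\llh-c\llo_{\phi(i)}}\le\eps\norm{c\llo_{\phi(i)}}$, the vector $h_i\lln=\W_2\llo\sigma(\W_1\llo h_i\llh)$ lies within $\beta^2\eps\gamma_2$ of $e\llo_{\phi(i)}:=\W_2\llo\sigma(\W_1\llo c\llo_{\phi(i)})$ (using $1$-Lipschitzness of $\sigma$, $\sigma(0)=0$, and $\norm{\W_\cdot}_{\mathrm{op}}\le\beta$), and there are at most $d$ distinct $e\llo_a$; so I take $\mathbf{G}\lln$ to be an orthonormal basis of $\mathrm{span}\{e\llo_1,\dots,e\llo_d\}$ (padded to $d$ columns) and $\bar e\llo_a:=\mathbf{G}^{(\ell+1)\top}e\llo_a$, for which $\mathbf{G}\lln\bar e\llo_a=e\llo_a$.

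For a generic layer $\ell\ge 1$ I construct its three sub-blocks. \emph{Attention:} since $\K_j\llo\cdot\Q_i\llo=(h_j\llo)^\top\W_K^{(\ell)\top}\W_Q\llo\,h_i\llo\approx(\hh_j\llo)^\top\mathbf{N}\,\hh_i\llo$ for the $d\times d$ matrix $\mathbf{N}:=\mathbf{G}^{(\ell)\top}\W_K^{(\ell)\top}\W_Q\llo\mathbf{G}\llo$ (depending on $\ell$; the error is $\bigO(\eps)$ by the invariant and the norm bounds, since $\norm{h_i\llo},\norm{\mathbf{G}\llo\hh_i\llo}=\bigO(1)$), pick $\Wh_Q\llo,\Wh_K\llo\in\R^{d\times d}$ realizing this bilinear form (e.g.\ $\Wh_K\llo=\mathbf{I}$ and $\Wh_Q\llo=\mathbf{N}$). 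Then the compressed logits differ from the originals by $\bigO(\eps)$, so a standard softmax-perturbation estimate (as in the proof of \cref{thrm:narrow_attention}) gives ${a_{ij}\llo}/{\ah_{ij}\llo}=1+\bigO(\eps)$. \emph{Pooling:} let $\mathbf{G}^{(\ell+1/2)}$ be an orthonormal basis of $\mathrm{span}\{c\llo_1,\dots,c\llo_d\}$ and set $\Wh_V\llo:=\mathbf{G}^{(\ell+1/2)\top}\W_V\llo\mathbf{G}\llo$, so that $\Vh_j\llo=\mathbf{G}^{(\ell+1/2)\top}\V_j\llo+\bigO(\eps)$; then, using $\sum_j\abs{\ah_{ij}\llo-a_{ij}\llo}=\bigO(\eps)$ and $\norm{\V_j\llo}=\bigO(1)$, $\hh_i\llh=\sum_j\ah_{ij}\llo\Vh_j\llo=\mathbf{G}^{(\ell+1/2)\top}\sum_j a_{ij}\llo\V_j\llo+\bigO(\eps)=\mathbf{G}^{(\ell+1/2)\top}h_i\llh+\bigO(\eps)$. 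Consequently $\hh_i\llh$ is within $\bigO(\eps)$ of the fixed vector $\bar c\llo_{\phi(i)}:=\mathbf{G}^{(\ell+1/2)\top}c\llo_{\phi(i)}$; since each $c\llo_a$ lies in the span, $\mathbf{G}^{(\ell+1/2)}\bar c\llo_a=c\llo_a$ (so the invariant carries to the half-step), and the $\bar c\llo_a$ inherit $\norm{\bar c\llo_a}^2=\norm{c\llo_a}^2\in(\gamma_1^2,\gamma_2^2)$ and $\bar c\llo_a\cdot\bar c\llo_b=c\llo_a\cdot c\llo_b<\gamma_1^2/2$.

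\emph{MLP block via $\ReLU$} — the crux. Reserve one extra width coordinate, held identically $1$, to serve as a bias (so the width is $d+1$, absorbed into the constants; under (1) this is still $\bigO(\log n/\eps^2)$). Choose $\Wh_1\llo$ so that coordinate $b$ of $\Wh_1\llo\hh_i\llh$ equals $\bar c\llo_b\cdot\hh_i\llh-\tfrac34\gamma_1^2$ and the bias coordinate passes through unchanged. By $\norm{\hh_i\llh-\bar c\llo_{\phi(i)}}=\bigO(\eps)$ and the inherited separation, this value is $\ge\norm{\bar c\llo_{\phi(i)}}^2-\tfrac34\gamma_1^2-\bigO(\eps)\ge\tfrac14\gamma_1^2-\bigO(\eps)>0$ when $b=\phi(i)$, and $\le\tfrac12\gamma_1^2-\tfrac34\gamma_1^2+\bigO(\eps)<0$ when $b\ne\phi(i)$, once $\eps$ is small enough; hence $\hh_i\llt=\sigma(\Wh_1\llo\hh_i\llh)$ equals the scaled indicator $s\llo_{\phi(i)}\mathbf{1}_{\phi(i)}$ up to $\bigO(\eps)$, where $s\llo_a:=\norm{c\llo_a}^2-\tfrac34\gamma_1^2\in[\tfrac14\gamma_1^2,\gamma_2^2]$ is bounded away from $0$. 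Finally take $\Wh_2\llo$ whose $a$-th column is $\bar e\llo_a/s\llo_a$ (plus bias pass-through): then $\hh_i\lln=\Wh_2\llo\hh_i\llt=\bar e\llo_{\phi(i)}+\bigO(\eps)$ (the error is $\bigO(\eps)$ because $\norm{\Wh_2\llo}_{\mathrm{op}}=\bigO(1)$, using $\norm{e\llo_a}\le\beta^2\gamma_2$ and $s\llo_a\ge\tfrac14\gamma_1^2$), and $\mathbf{G}\lln\hh_i\lln=e\llo_{\phi(i)}+\bigO(\eps)$ reproduces $h_i\lln$ within $\bigO(\eps)$, closing the induction.

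It remains to seed the induction at layer $0$, the only place the dichotomy enters. Its pooling block (and everything downstream) is handled exactly as above, using the assumed clustering of $\Ho^{(1/2)}$ with $\Wh_V^{(0)}:=\mathbf{G}^{(1/2)\top}\W_V^{(0)}$ and requiring no hypothesis on $X$, so only the layer-$0$ logits $x_j^\top\W_K^{(0)\top}\W_Q^{(0)}x_i$ must be approximately realized in width $d$. Under (2) I project onto the column span of $\Xb$: with $\mathbf{P}=\mathbf{G}_0\mathbf{G}_0^\top$ the associated projector, $\norm{\mathbf{P}x_i-x_i}\le 2\eps$, so these logits are preserved to additive $\bigO(\eps)$ by choosing $\Wh_K^{(0)},\Wh_Q^{(0)}\in\R^{d\times d_{in}}$ to realize $x_j^\top\mathbf{P}\W_K^{(0)\top}\W_Q^{(0)}\mathbf{P}x_i$. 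Under (1), $d=\Theta(\log n/\eps^2)$ suffices to apply \cref{cor:jltdot} with a union bound over the $m\le n^2$ attention pairs to the bounded vectors $\{\W_Q^{(0)}x_i\}_i$ and $\{\W_K^{(0)}x_j\}_j$, yielding $\M\in\R^{d\times D}$ with $(\M\W_K^{(0)}x_j)\cdot(\M\W_Q^{(0)}x_i)=\K_j^{(0)}\cdot\Q_i^{(0)}\pm\bigO(\eps)$; set $\Wh_K^{(0)}:=\M\W_K^{(0)}$, $\Wh_Q^{(0)}:=\M\W_Q^{(0)}$. Either way the layer-$0$ attention ratios are $1+\bigO(\eps)$ and the induction starts; composing the $L=\bigO(1)$ layers the error multiplies by a constant each time, so the final output error is $\bigO(\eps)$ and all attention ratios are $1+\bigO(\eps)$, and reading out with $\U_{out}:=\mathbf{G}^{(L)}$ (equivalently, folding $\mathbf{G}^{(L)}$ into the last $\Wh_2$) gives $\norm{\mathcal{T}(X)_i-\widehat{\mathcal{T}}(X)_i}_2=\bigO(\eps)$. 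The main obstacle I foresee is the $\ReLU$ block: rigorously showing that $\hh_i\llh$ is $\bigO(\eps)$-close to a \emph{single} fixed vector $\bar c\llo_{\phi(i)}$ (not merely that $\mathbf{G}^{(\ell+1/2)}\hh_i\llh$ is close to a center) and that the thresholding survives that slack — which is exactly where all three cluster conditions ($\gamma_1<\norm{c\llo_a}$, $\norm{c\llo_a}<\gamma_2$, $c\llo_a\cdot c\llo_b<\gamma_1^2/2$) and $\sigma=\ReLU$ are consumed — together with the bias-coordinate bookkeeping needed to keep $\widehat{\mathcal{T}}$ biasless and of width $d$ (up to an additive constant).
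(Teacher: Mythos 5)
Your proposal is correct and follows the paper's overall strategy (handle the first layer by JLT or low-rank projection; snap $\hh_i\llh$ to a cluster indicator through $\ReLU$; map each indicator to the target output with the second linear layer), but the inductive engine differs in a useful way. The paper's invariant is pairwise dot-product preservation, $\lvert \hh_i\llh\!\cdot\!\hh_j\llh - h_i\llh\!\cdot\! h_j\llh\rvert = \bigO(\eps)$, established by a dedicated lemma about attention pooling (\cref{lem:attention_pool_closeness}) and by choosing the rows of $\Wh_1\llo$ to be rescaled \emph{representative} node embeddings $4\hat{c}_a/\norm{c_a}^2$ from the compressed network; the paper then separately recovers a column-wise rank-$d$ approximation of $\Ho\lln$ to restart \cref{thrm:narrow_network_low_rank_2}. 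You instead carry a strictly stronger invariant: an isometric embedding $\mathbf{G}\llo$ with $\norm{\mathbf{G}\llo\hh_i\llo - h_i\llo}=\bigO(\eps)$ plus closeness of each $\hh_i\llo$ to one of $\le d$ fixed vectors. This absorbs the dot-product lemma as a one-line consequence, makes the cluster-center vectors $\bar c_a=\mathbf{G}^{(\ell+1/2)\top}c_a$ canonical (rather than depending on a chosen representative), and the bias threshold $-\tfrac34\gamma_1^2$ falls out cleanly from the inherited $\norm{\bar c_a}$ and $\bar c_a\cdot\bar c_b$. What the paper's weaker invariant buys is that it literally only needs the pairwise Gram information, which is conceptually minimal; what yours buys is a shorter, more self-contained induction with one invariant instead of the hand-off between the dot-product lemma and the rank-$d$-approximation lemma.

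Two small points worth flagging. First, the paper's model formulation in \cref{sec:notations} has no bias in $\W_1$, yet the paper's proof (and yours) needs one. You handle this honestly with a constant coordinate, but that makes the compressed width $d+1$ rather than $d$ as stated in the theorem; the paper simply uses a bias without remarking that the formulation lacks one. Neither is a real gap, but you should say explicitly that the claim should read ``width $d+\bigO(1)$'' (or extend the architecture to include biases). Second, the paper's stated bias $-3\norm{c_a}^2$ appears to be a typo for $-3$ given the $4\hat c_a/\norm{c_a}^2$ normalization of the row; your bias is stated correctly for your normalization. You should also add the paper's small bookkeeping remark that when fewer than $d$ clusters are present the unused rows of $\Wh_1$ can be zeroed out, and in case (2) spell out the layer-$0$ query/key construction as, e.g., $\Wh_Q^{(0)}=\mathbf{G}_0^\top\W_K^{(0)\top}\W_Q^{(0)}\mathbf{G}_0\mathbf{G}_0^\top$ and $\Wh_K^{(0)}=\mathbf{G}_0^\top$, both in $\R^{d\times d_{in}}$, so that the $d\times d$ bilinear form is realized.
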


The proof for this theorem can be found in \cref{sec:proof_narrow_network_cluster}. The main idea of the proof is based on separating the nodes from different clusters using $d$ linear mappings in the activation function layers.

\section{Experiments}

\begin{figure*}[t!]
\vspace{-0.2in}
\subfloat[][]{\includegraphics[width = 1.88in]{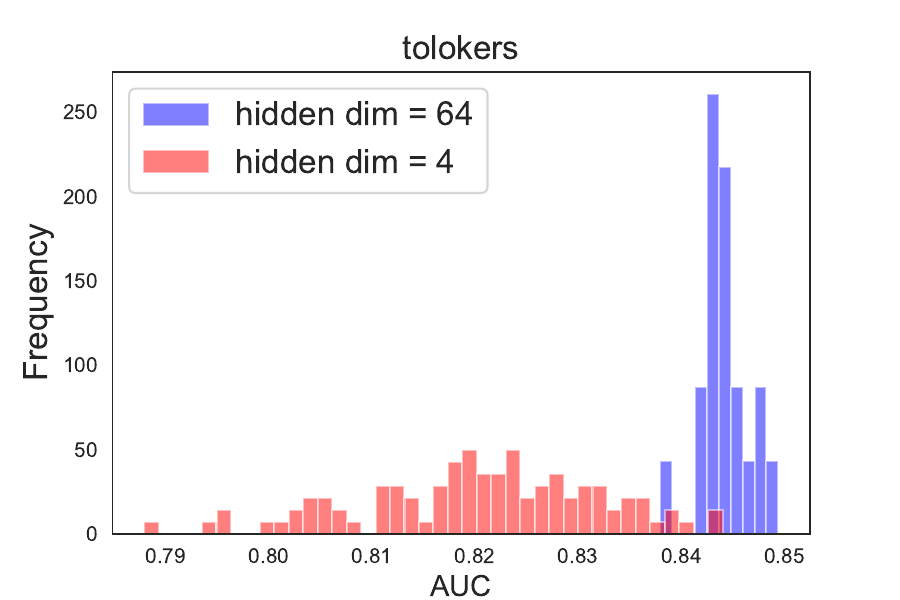}} 
\subfloat[][]{\includegraphics[width = 1.88in]{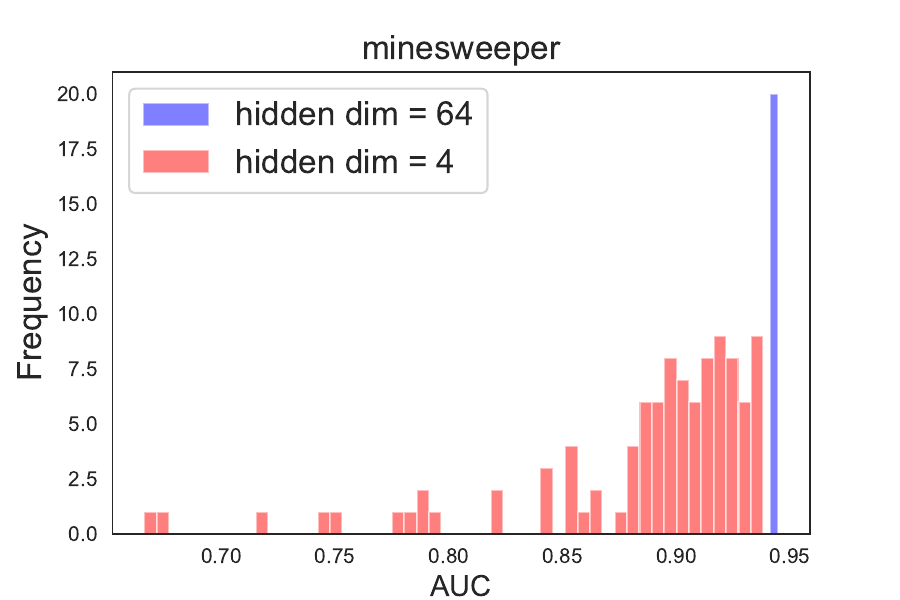}}
\subfloat[][]{\includegraphics[width = 1.88in]{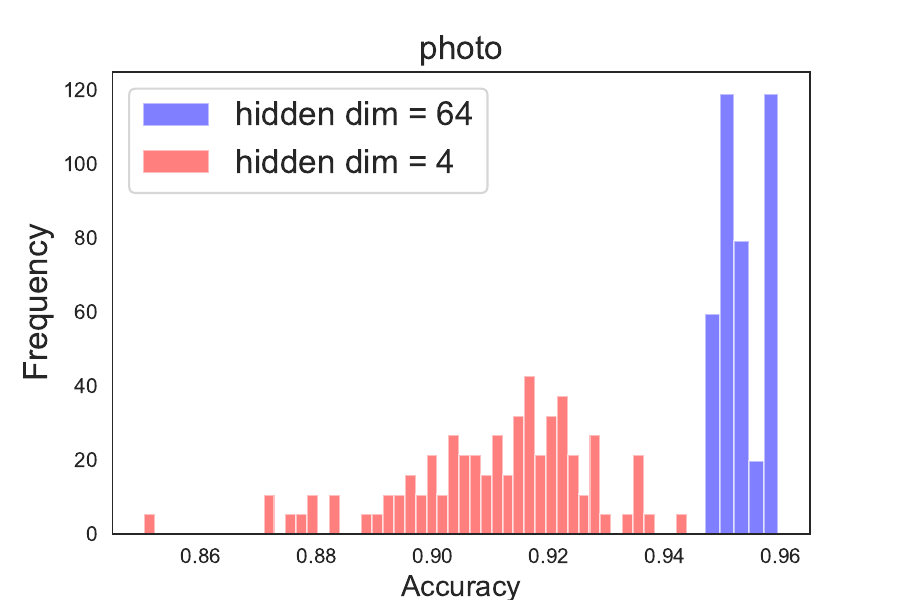}}
\caption{Comparison of the results from a relatively large network with hidden dimension 64 and a small network with hidden dimension 4.}
\label{fig:exp_results}
\vspace{-0.1in}
\end{figure*}

\begin{table}[t]
\caption{Average operator norm of the linear mappings in the network and average norm of the input vectors of the Transformer layers from reference large networks. All numbers are average $\pm$ std.}
    \centering
    \scalebox{0.8}{
    \begin{tabular}{l|ccc}
    \toprule
        Dataset & Tolokers & Minesweeper & Photo\\
        \midrule
        Operator Norm Average & $2.83 \pm 0.13$ & $2.62 \pm 0.07$ & $2.15 \pm 0.06$\\
        Vector Norm Average & $3.71 \pm 0.10$ & $3.44 \pm 0.21$ & $3.64 \pm 0.23$ \\
        \bottomrule
    \end{tabular}
    }
    \label{tab:norms}
\end{table}

In this section, we experimentally investigate whether models with significantly smaller dimensions can achieve high-quality results, and whether any small hidden dimension network can perform competitively with larger networks. While much of the theory in this work is constructive, some steps require extensive trial and error. To explore this question, we conduct the following experiment instead of directly compressing the actual network: we train a very small network with a hidden dimension of 4, using 100 different initializations. The Tolokers and Minesweeper datasets have very low homophily scores, but their input dimensions are also very low: ten for Tolokers and seven for Minesweeper. On the other hand, the Photo dataset has input node features of size 745 but exhibits a very high homophily score.

In \cref{fig:exp_results}, we compare the distribution of results from a relatively large network with a hidden dimension of 64 to that of the small network. As expected, gradient descent performs worse on average for the smaller network. However, on each dataset, the maximum test AUC or accuracy closely approaches the results of the large network, indicating the existence of a compressed network with competitive performance. Nevertheless, exploring initializations is impractical for large real-world datasets, emphasizing the need to learn to compress large models and leaving the investigation of practical compression algorithms as future work.

Furthermore, we measure the operator norm and vector norms in the large trained model to validate some assumptions in \cref{sec:notations}. The results, provided in \cref{tab:norms}, confirm that these norms are reasonably small. More details about the datasets, experimental setup, hyperparameters, and baseline comparisons can be found in \cref{sec:exp_details}.

\section{Conclusion, Limitations \& Future Work}
In this work, we have analyzed the compressibility of Graph Transformers under several assumptions. We have showed that under mild assumptions, the hidden dimension of the underlying attention calculation can be reduced to a level that is logarithmic in the number of nodes, and the low-rank approximation of matrices can also lead to compression of the model with minimal losses.

Although we have proved the existence of a compressed network in many scenarios, this does not imply that training with a gradient-based algorithm will necessarily lead to the introduced weights, but this gives at least the guarantee that such a network exists. However, if we can train the large model, many proofs in this work have been constructive, and even for existence proofs such as JLT-lemma, random creation of matrices would have a fair chance to be a valid result, and thus trial and error is possible with these works. However, we keep the experiments on these setups and develop efficient algorithms for compressing based on these theorems for future work.

The results in this work could potentially extend to other architectures such as low-rank attention methods and other GNN variants, opening avenues for future work. Additionally, while this work has primarily focused on scenarios where network compression is possible, impossibility results for compression have not been thoroughly explored. Finding tighter bounds on the hidden dimension of the compressed network would be an interesting problem for future work.

\begin{ack}
This work was supported in part by the Natural Sciences and Engineering Resource Council of
Canada,
the Fonds de Recherche du Québec - Nature et technologies (under grant ALLRP-57708-2022),
the Canada CIFAR AI Chairs program,
the BC DRI Group, Calcul Québec, Compute Ontario, and the Digital
Resource Alliance of Canada.
Honghao Lin was supported in part by a Simons Investigator Award, NSF CCF-2335412, and a CMU Paul and James Wang Sercomm Presidential Graduate Fellowship.
\end{ack}

\bibliography{bibliography}
\bibliographystyle{apalike}

\newpage

\appendix
\onecolumn

\section{Notation Table}

\begin{table}[h]
\caption{A summary of the notations used in this paper. The hat notation always refers to a compressed network equivalent of a vector or matrix from the reference network.}
\label{tab:notations}
\centering
\scalebox{0.9}{
\begin{tabular}{l|l} 
\toprule
Notation & Definition  \\ 
\hline
    $n$ & The number of nodes in the graph \\
    $m$ & The number of attention edges in total, including graph and expander edges \\
    $d$     &     Hidden dimension of a narrow network     \\
    $D$     &     Hidden dimension of the original large graph  \\
    $L$     &     The total number of layers  in the network    \\
    $\ell$     &     Arbitrary layer index     \\
    $\mathbf{V}$   &     Value mapping of the vectors in the attention mechanism     \\
    $\mathbf{Q}$   &     Query mapping of the vectors in the attention mechanism \\
    $\mathbf{K}$   &     Key mapping of the vectors in the attention mechanism \\
    $\mathbf{W}_\cdot^{(\ell)}$     &     Weight matrix of mapping such as key, query, value, edge features, or bias in layer $\ell$     \\
    $\widehat{\mathbf{W}}_\cdot^{(\ell)}$     &  Low dimensional network's weight matrix for a mapping in layer $\ell$     \\
    $\M_\cdot$ & A linear mapping matrix (usually from the higher dimension to the smaller)\\
    $\ReLU$     &    Rectified Linear Unit    \\
    $\Ho\llo$ & Output of layer $\ell-1$ from the reference network \\
    $\Hb\llo$ & A low-rank estimation of $\Ho\llo$ \\
    $\Hh\llo$ & Output of layer $\ell-1$ from a compressed network \\
    $h_i\llo$ & column $i$ of matrix $\Ho\llo$ \\
    $a_{ij}^{(\ell)}$ & The Attention score between nodes $i$ and $j$ in layer $\ell$\\
    $\hat{a}_{ij}^{(\ell)}$ & The attention score between nodes $i$ and $j$ in layer $\ell$ from a smaller network\\
\bottomrule
\end{tabular}
}
\end{table}

\section{Related Work}
\paragraph{GNNs \& Graph Transformers}  
Many variants of GNNs have been used for tackling transductive tasks on graphs. Examples include GCN \citep{kipf2016semi}, GAT \citep{velickovic2018graph}, and GraphSAGE \citep{hamilton2017inductive}. Recently, Graph Transformers have been developed to address the long-range dependencies that message-passing-based methods struggle to capture. Since full-Transformers are very costly, especially for graphs with millions of nodes, the focus has been on sparser patterns of the Transformer \citep{shirzad2023exphormer, shirzad2024spexphormer} or low-rank estimations of the attention matrix \citep{wu2022nodeformer, wu2023difformer, wu2019simplifying, deng2024polynormer}.

Oversmoothing is a problem rising in many variants of GNNs~\citep{oono2019graph,nt2019revisiting,roth2024preventing}. In this problem, all the embeddings converge to a single vector, or embeddings matrix loses its rank rapidly. The rank loss problem is not unique to message-passing --- attention mechanisms can suffer from a similar problem as well \citep{dong2021attention}. Lower rank embeddings are not always bad, many coarsening/pooling-based methods on graphs rely on combining nodes with similar features into single nodes to reduce the graph size and process only a much smaller graph \citep{liu2022graph}.

\paragraph{Theoretical Works}
Many recent works have theoretically analyzed the required hidden dimension or depth of the Transformers or GNNs for graph-related algorithms \citep{loukas2019graph,sanford2024representational, sanford2024transformers, sanford2024understanding}. Also, Johnson-Lindestrauss lemma have helped to analyze Transformers for investigating their power in learning diverse attention patterns \citep{likhosherstov2021expressive}. A variant of linear Transformer have been designed based on this lemma \citep{wang2020linformer}.

\section{Proofs}
\label{ap:proofs}

\subsection{Proof of \cref{thrm:narrow_attention}}
\label{proof:narrow_attention}
\begin{proof}
In the proof we use hat notation, $\widehat{\square}$, for the vectors and matrices from $\widehat{\mathcal{T}}$, for example, $\hat{h}^{(\ell)}$ are the outputs of layer $\ell$, and $\Wh_\cdot$ are the weight matrices for this network. In all layers for both networks $\W_V, \W_1$, and $\W_2$, are of the same size, so we set $\Wh_V = \W_V$, $\Wh_1 = \W_1$, and $\Wh_2 = \W_2$.

For the proof, we want to find $\eps^{(0)}, \cdots, \eps^{(L)}$ in a way that for any $v$ in layer $\ell$, $\abs{h_v^{(\ell)} - \hat{h}_v^{(\ell)}} < \eps^{(\ell)}$. We will find these bounds inductively, starting from the first layer. We have $\eps^{(0)} = 0$, as both networks have the same input, and we want to bound $\eps^{(\ell+1)}$ based on $\eps^{(\ell)}$. 

We have $\Q\llo = \W_Q\llo \Ho\llo$, $\K\llo = \W_K\llo \Ho\llo$ and assume 
$\Bar{\Q}\llo = \W_Q\llo \Hh\llo$, $\Bar{\K}\llo = \W_K\llo \Hh\llo$. 
Because of the operator norm of matrices $\W_Q$ and $\W_K$, for each $i$ we have $\norm{q_i\llo - \Bar{q}_i\llo} \leq \eps\llo \beta$ and $\norm{k_i\llo - \Bar{k}_i\llo} \leq \eps\llo \beta$. Also, we have $\norm{q\llo_i}, \norm{k\llo_i} \leq \beta\sqrt{\alpha}$, thus $\norm{\Bar{q}_i\llo}, \norm{\Bar{k}_i\llo} \leq \beta(\eps\llo + \sqrt{\alpha})$. Now, for each pair of $i$ and $j$, we have:

\begin{align*}
    \abs{q_i\llo\cdot k_j\llo - \Bar{q}_i\llo \cdot \Bar{k}_j\llo} &= \abs{q_i\llo\cdot k_j\llo - \Bar{q}_i\llo \cdot k_j\llo + \Bar{q}_i\llo \cdot k_j\llo - \Bar{q}_i\llo \cdot \Bar{k}_j\llo} \\
    & \leq \abs{q_i\llo\cdot k_j\llo - \Bar{q}_i\llo \cdot k_j\llo} + \abs{\Bar{q}_i\llo \cdot k_j\llo - \Bar{q}_i\llo \cdot \Bar{k}_j\llo} \\
    & = \abs{(q_i\llo - \Bar{q}_i\llo) \cdot k_j\llo} + \abs{\Bar{q}_i\llo \cdot (k_j\llo - \Bar{k}_j\llo)} \\
   & \leq \norm{q_i\llo - \Bar{q}_i\llo} \norm{k_j\llo} + \norm{\Bar{q}_i\llo} \norm{k_j\llo - \Bar{k}_j\llo} \\
    & \leq \sqrt{\alpha}\beta\eps\llo + (\sqrt{\alpha}+\beta\eps\llo)\beta\eps\llo \\
    & = 2\sqrt{\alpha}\beta\eps\llo + (\beta\eps\llo)^2
\end{align*}

On the other hand, according to the ~\ref{cor:jltdot}, for a $0 < \eps < 1/2$ and $d=\mathcal{O}(\frac{\log(n)}{\eps^2})$ there exists a matrix $\M_{QK} \in \R^{d \times D}$, such that if we define $\Qh\llo = \M_{QK}\Bar{\Q}\llo$ and $\Kh\llo = \M_{QK}\Bar{\K}\llo$, $\abs{\Bar{q}_i\llo \cdot \Bar{k}_j\llo - \qh_i\llo \cdot \kh_j\llo} < \beta^2(\alpha+(\eps\llo)^2+2\sqrt{\alpha}\eps\llo)\eps$ for all $(i, j)$ pairs in the attention pattern. Note that we can define $\Wh_Q\llo = \M_{QK}\llo\W_Q\llo$, and $\Wh_K\llo = \M_{QK}\llo\W_K\llo$, both in $\R^{d \times D}$, as weights for the narrow attention score estimator network. With a triangle inequality we have $$\abs{q_i\llo \cdot k_i\llo - \qh_i\llo \cdot \kh_i\llo} < \beta^2(\alpha+(\eps\llo)^2+2\sqrt{\alpha}\eps\llo)\eps + 2\sqrt{\alpha}\beta\eps\llo + (\beta\eps\llo)^2.$$

By setting $\eps\llo \leq 1$, we have $$\abs{q_i\llo \cdot k_i\llo - \qh_i\llo \cdot \kh_i\llo} < \beta^2(\alpha+1+2\sqrt{\alpha})\eps + \beta(2\sqrt{\alpha}+\beta)\eps\llo.$$
Let us define $\eps_a = \beta^2(\alpha+1+2\sqrt{\alpha})\eps + \beta(2\sqrt{\alpha}+\beta)\eps\llo$, we have:

\begin{gather*}
\ah_{ij}^{(\ell)} = \frac{\exp(\qh_i\llo\cdot \kh_j\llo)}{\sum_{u \in \mathcal{N}_H(i)} \exp(\qh_i\llo \cdot\kh_u\llo)} \leq \frac{\exp(q_i\llo\cdot k_j\llo + \eps_a)}{\sum_{u \in \mathcal{N}_H(i)} \exp(q_i\llo\cdot k_j\llo -\eps_a)} \leq a_{ij}^{(\ell)}\exp(2\eps_a)
\\
\ah_{ij}^{(\ell)} = \frac{\exp(\qh_i\llo\cdot \kh_j\llo)}{\sum_{u \in \mathcal{N}_H(i)} \exp(\qh_i\llo\cdot \kh_u\llo)} \geq \frac{\exp (q_i^{(\ell)}\cdot k_j^{(\ell)} -\eps_a)}{\sum_{u \in \mathcal{N}_H(i)} \exp(q_i^{(\ell)} \cdot k_u^{(\ell)} +\eps_a)} \geq a_{ij}^{(\ell)}\exp(-2\eps_a)
\end{gather*}
Take note that if $2\eps_a < 1$, we have $\exp(2\eps_a) < 1 + 2\eps_a$ and $\exp(-2\eps_a) > 1 - \eps_a$, and for any $i, j$, $a\llo_{i,j} \leq 1$. Thus for any $i$ and $j$, $\frac{a\llo_{i,j}}{\ah\llo_{i,j}} = 1 + \bigO(\eps_a) = 1 + \bigO(\eps\llo)$.

Now we bound $\norm{h_i\llh - \hh_i\llh}$: 

\begin{align*}
   \norm{h_i\llh - \hh_i\llh} &= \norm{\sum_{j\in Nei(i)} a_{ij}\llo v_j\llo - \ah_{ij}\vh_j\llh} \\
    & = \norm{\sum_{j\in Nei(i)} a_{ij}\llo v_j\llo - \ah_{ij}\llo v_j\llo + \ah_{ij}\llo v_j\llo - \ah_{ij}\vh_j\llo} \\
    &= \norm{\sum_{j\in Nei(i)} (a_{ij}\llo - \ah_{ij}\llo) v_j\llo + \ah_{ij}\llo(v_j\llo - \vh_j\llo)} \\
    &= \norm{(v_j\llo - \vh_j\llo) + v_j\llo\sum_{j\in Nei(i)} (a_{ij}\llo - \ah_{ij}\llo)} \\
    & \leq \norm{v_j\llo - \vh_j\llo} + \norm{v_j\llo}\sum \abs{a_{ij}\llo - \ah_{ij}\llo} \\
    & \leq \eps\llo\beta + \sqrt{\alpha} \sum \max (1-\exp(-2\eps_a), \exp(2\eps_a) -1) a_{ij}\llo \\
    & \leq \eps\llo\beta + \sqrt{\alpha} (\exp(2\eps_a) -1),
\end{align*} 

and since $1+x < \exp(x) < 1+2x$ for $0<x<1$, if we have $\eps_a < 1$, we have 
\begin{equation}
    \norm{h_i\llh - \hh_i\llh} \leq \beta\eps\llo + 4 \sqrt{\alpha} \eps_a
\end{equation}

For the feed-forward network part, we know that this network is $\beta^2$-Lipschitz because $\W_1\llo$ and $\W_2\llo$ have maximum operator norm $\beta$ and $\sigma$ is a 1-Lipschitz activation function. Thus we have
\begin{equation*}
    \norm{h_i\lln - \hh_i\lln} \leq \beta^2(\beta\eps\llo + 4 \sqrt{\alpha} \eps_a) =
    (\beta^3 + 8\beta\alpha+4\beta^2\sqrt{\alpha})\eps\llo + 4\beta^2(\alpha\sqrt{\alpha} + 2\alpha + \sqrt{\alpha}) \eps.
\end{equation*}

Both $\beta^3 + 8\beta\alpha+4\beta^2\sqrt{\alpha}$ and $4\beta^2(\alpha\sqrt{\alpha} + 2\alpha + \sqrt{\alpha})$ are constants, and if we define them as $c_1$ and $c_2$, we have 

\begin{equation*}
    \eps\lln \leq c_1\eps\llo + c_2\eps
\end{equation*}

Given $\eps^{(0)} = 0$, as both networks get the same input, we have
\begin{align*}
    \eps^{(L)} &\leq c_1\eps^{(L-1)} + c_2\eps \\
    & \leq c_1(c_1\eps^{(L-2)} + c_2\eps) + c_2\eps \\
    & \cdots \\
    & \leq c_2\eps (c_1^{L-1} + \cdots + c_1) \\
    & = \frac{c_1(c_2^L-1)}{c_2-1}\eps
\end{align*}

While the error increases exponentially with the number of layers, when we have $L = O(1)$, then the error is bounded by a constant factor of chosen $\eps$. Now, we know that $\norm{\mathcal{T}(X)_i - \widehat{\mathcal{T}}(X)_i}_2 \leq \eps^{(L)} = \mathcal{O(\eps)}$.

This also holds that $\eps\llo = \bigO(\eps)$ for each $\ell$, thus $\frac{a_{ij}\llo}{\ah_{ij}\llo} = 1 + \mathcal{O}(\eps).$ \end{proof}

\subsection{Proof of \cref{prop:narrow_network_low_rank_1}}
\label{proof:narrow_network_low_rank_1}
\begin{proof}
    First of all, take note that any matrix $\mathbf{B}\in \R^{D \times n}$ of rank $d$, we have $\U \in \R^{D \times d}$ and $\Lam \in \R^{d \times D}$ that $\U\Lam \mathbf{B} = \mathbf{B}$. Particularly, $\Lam$ can be a selection of rows from $\mathbf{B}$ covering the whole span of columns of $\mathbf{B}$, and $\U$ will be a linear combination of the selected rows making columns of $\mathbf{B}$.
    
    We will make embeddings in each layer in a way that the embeddings from the low-dimensional network can be mapped linearly to the high dimension $D$ to recreate the embeddings from the high-dimensional network.

    Let us assume for each layer $\ell$ the input of the small network is $\Hh\llo \in \R^{d \times n}$, such that for a $\U \in \R^{D \times d}$, $\U_{in}\llo \Hh\llo = \Ho\llo$. Now, we will create the weights for the layer $\ell$ of $\widehat{\mathcal{T}}$ to have sizes $d\times d$, and the output of the layer can be mapped with a linear map $\U\llo_{out}$ to the outputs of the layer $\ell$ from $\mathcal{T}$.

    First, we will show the possibility of consistency in the attention scores. The attention scores in the matrix are a sparse version of $\mathbf{A} = \Ho\tp\W_Q\tp\W_K\Ho$, for this part, since all the matrices and representations are in layer $\llo$, we remove $\llo$ superscripts for the brevity of the writing. We need a $\Wh_Q, \Wh_K \in \R^{d \times d}$ that $\widehat{\mathbf{A}} = \Hh\tp\Wh_Q\tp\Wh_K\Hh$ gives us a similar attention matrix. We have:
    $$\mathbf{A} = \Ho\tp\W_Q\tp\W_K\Ho = \Hh\tp\U\tp\W_Q\tp\W_K\U\Hh.$$ Take note that $\U\tp\W_Q\tp\W_K\U$ is a matrix of shape $d \times d$. Now, if we have $\Wh_K = \U\tp\W_Q\tp\W_K\U$ and $\Wh_Q = I_d$, where $I_d$ is the identity matrix of size $d$, we have $\widehat{\mathbf{A}} = \Hh\tp\Wh_Q\tp\Wh_K\Hh = \mathbf{A}$.

Now for compressing the $\W_V$, we have $\V = \W_V\Ho = \W_V\U\Hh$. Since $\rank(\Ho) \leq d$, $\rank(\V) \leq d$. Thus there should be $\U_V \in \R^{D \times d}$ and $\Lam_V \in \R^{d \times D}$ that $\U_V \Lam_V \V = \V$. We have $\V = \U_V \Lam_V \W_V\U\Hh$ and thus we can take $\Wh_V = \Lam_V \W_V \U$, and thus $\Wh_V \in \R^{d \times d}$. Also, $\Vh = \Wh_V\Hh$, and we have $\V = \U_V\Vh$.

We have $\Ho\llh = \V\mathbf{A}$, and we will have $\Hh = \Vh\mathbf{A} = \U_V \V \mathbf{A}$. Thus, $\Ho\llh = \U_V\Hh\llh$. 

The next linear mapping comes with an activation function; thus, we can not do exactly the same trick as $\V$ mapping. We have $\Ho\llt = \sigma(\W_1\Ho\llh) = \sigma(\W_1\U_V\Hh\llh)$. Now, since we know $\rank(\Ho\llt) \leq d$, we have $\U_\sigma \in \R^{D \times d}$ and $\Lam_\sigma \in \R^{d \times D}$ in a way that $\U_\sigma\Lam_\sigma\Ho\llt = \Ho\llt$. In this case, we construct $\U_\sigma$ and $\Lam_\sigma$ so that we can also reduce the size of $\W\llo_1$. To construct this, we choose $\Lam_\sigma$ to have each row as a 1-hot vector, selecting maximum $d$ rows from $\Ho\llt$ that any other rows in the $\Ho\llt$ can be constructed from a linear combination of these rows. In this construction, we have $\Lam_\sigma \sigma(\W_1h\llh) = \sigma(\Lam_\sigma\W_1h\llh)$, since $\Lam_\sigma$ just selects rows from the mapped value and the activation function is an element-wise function. Thus we can have $\Wh\llo_1 = \Lam_\sigma\W\llo_1 \U_V \in \R^{d \times d}$. And we have $\Ho\llt = \U_\sigma \Hh\llt$.

The next linear layer exactly similar to the $V$ mapping can be compressed in a way that we have $\Ho\lln = \U_{out}\Hh\lln$. Now by induction since the assumption is correct for the input of the network, and if the assumption holds for the input we can have a compressed layer that the assumption will hold for the output layer, and the output of each layer is the input of the next layer the theorem is proved.
\end{proof}

\subsection{Proof of \cref{thrm:narrow_network_low_rank_2}}
\label{sec:proof_narrow_network_low_rank_2}
We will first prove the following lemma which will be used in several upcoming theorems:

\begin{lemma}
    \label{lem:low_rank_est_map}
        If $\Ho, \Hb \in \R^{D \times n}$, $\rank(\Hb) = d$, and for each row $i$, $\norm{\Ho_i - \Hb_i} \leq \eps$ for some value $\eps$, there exist matrices $\U \in \R^{D \times d}$ and $\Lam \in \R^{d \times D}$ such that $\forall i: \norm{\Ho_i - \U\Lam\Ho_i} \leq \eps$.
    \end{lemma}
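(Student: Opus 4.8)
The plan is to realize $\U\Lam$ as the orthogonal projection onto the column space of $\Hb$. Since $\rank(\Hb) = d$, this column space --- call it $\mathcal{C} \subseteq \R^D$ --- has dimension exactly $d$; I would fix an orthonormal basis $u_1,\dots,u_d$ of $\mathcal{C}$, set $\U := [\,u_1 \mid \cdots \mid u_d\,] \in \R^{D\times d}$, and take $\Lam := \U\tp \in \R^{d \times D}$. Then $\U\tp\U = I_d$, and $P := \U\Lam = \U\U\tp$ is exactly the orthogonal projector onto $\mathcal{C}$, so $P$ is symmetric and idempotent and $\norm{I_D - P}_{\mathrm{op}} \le 1$.

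The one substantive point is that $P$ fixes every column of $\Hb$: because $\Hb_i \in \mathcal{C}$ we have $P\Hb_i = \Hb_i$, i.e.\ $(I_D - P)\Hb_i = 0$. Consequently, for each $i$, writing $\Ho_i = \Hb_i + (\Ho_i - \Hb_i)$ and applying $I_D - P$,
\[
\Ho_i - \U\Lam\Ho_i \;=\; (I_D - P)\Ho_i \;=\; (I_D - P)\Hb_i + (I_D - P)(\Ho_i - \Hb_i) \;=\; (I_D - P)(\Ho_i - \Hb_i).
\]
Taking Euclidean norms and combining $\norm{I_D - P}_{\mathrm{op}} \le 1$ with the hypothesis $\norm{\Ho_i - \Hb_i} \le \eps$ then gives $\norm{\Ho_i - \U\Lam\Ho_i} \le \eps$ for every $i$, which is the claim. (If $d = D$ then $P = I_D$ and the bound is immediate; the displayed computation covers this case with no change.)

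I do not expect a real obstacle here, but the step that genuinely matters is insisting on an \emph{orthogonal} projection rather than an arbitrary factorization $\U\Lam$ with $\U\Lam\Hb = \Hb$: a generic oblique rank-$d$ projection need not satisfy $\norm{I_D - \U\Lam}_{\mathrm{op}} \le 1$, so the residual $(I_D - \U\Lam)(\Ho_i - \Hb_i)$ could be amplified and the clean $\eps$ bound would be lost. Choosing the orthogonal projector onto the column space of $\Hb$ simultaneously annihilates the good approximant $\Hb_i$ and keeps the operator a contraction. One last remark for the downstream uses: this lemma only asserts the existence of some $\U \in \R^{D\times d}$, $\Lam \in \R^{d\times D}$; when a later argument additionally wants $\Lam$ to carry extra structure --- e.g.\ to be a row-selection matrix so it commutes with the elementwise activation, as used in the proof of \cref{prop:narrow_network_low_rank_1} --- that requires a separate construction and is not needed for the statement proved here.
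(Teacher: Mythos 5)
Your proof is correct and takes essentially the same approach as the paper: both realize $\U\Lam$ as the orthogonal projector onto the column span of $\Hb$ and then bound $\norm{\Ho_i - \U\Lam\Ho_i}$ by $\norm{\Ho_i - \Hb_i} \leq \eps$. The paper invokes the minimum-distance characterization of orthogonal projection directly, whereas you reach the same bound via $(I-P)\Hb_i = 0$ and $\norm{I-P}_{\mathrm{op}} \le 1$; these are equivalent, and your closing remark about oblique projections possibly amplifying the residual is a sound justification for why orthogonality is the right choice.
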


    \begin{proof}
        Take $\Lam$ to be $d$ base vectors of size $D$ making column span of $\Hb$. Each column of $\Ho$ has maximum distance of $\eps$ from this span since $\norm{\Ho_i - \Hb_i} \leq \eps$. Thus we can have a $\U \in \R^{D \times d}$ that $\U\Lam\Ho$ will be the projection of the $\Ho$ to the span of columns of $\Hb$, and since this projection is the minimum distance we have $\norm{\U\Lam\Ho_i - \Ho_i} \leq \norm{\Hb_i - \Ho_i} \leq \eps$.
    \end{proof}

\begin{proof}[Proof of the theorem]

    We will prove this theorem inductively by making the embeddings in each layer in a way that the embeddings from the low-dimensional network can be mapped to the high dimension $D$ to approximate the high-dimensional network. We will prove the following lemma that helps both in the first layer and dimension reduction for $\W_2$ mappings.

    Thus, because input $\XX$ has a low-rank estimation $\Xb$, we can make $\widehat{\XX} = \Lam\XX$, in a way that there exist a matrix $\U$ that $\forall i: \norm{\U\hat{x}_i - x_i} \leq \eps$. For the convenience of the writing, we take $\Ho^{(0)}$ as $\XX$ and $\Hh^{(0)}$ as $\widehat{\XX}$. Now, we have a $\U$ that $\Hh^{(0)}$ that $\norm{\U\llo\hh^{(0)} - h^{(0)}_i} \leq \eps$. We will prove the step of the induction with the following lemma:

    For the step of our induction assume in a layer such as $\ell$, we have input $\Hh\llo \in \R^{d \times n}$ such that there exist $\U\llo$ that $\forall i: \norm{\U\llo\hh\llo - h\llo_i} \leq \eps\llo$, there exist a Transformer layer of width $d$ as the next layer that there exist $\U\lln$ that $\forall i: \norm{\U\lln\hh\lln_i - h\lln_i} \leq c_1\eps\llo + c_2\eps$ for constants $c_1, c_2 = \bigO(1)$.

    First, we will show the possibility of consistency in the attention scores. The attention scores in the matrix are a sparse version of $\mathbf{A} = \Ho\tp\W_Q\tp\W_K\Ho$, for this part, since all the matrices and representations are in layer $\llo$, we remove $\llo$ superscripts for the brevity of the writing. We need a $\Wh_Q, \Wh_K \in \R^{d \times d}$ that $\hat{\mathbf{A}} = \Hh\tp\Wh_Q\tp\Wh_K\Hh$ gives us a similar attention matrix. We start by estimating the $\mathbf{A} = \Ho\tp\W_Q\tp\W_K\Ho$ with $\Bar{\mathbf{A}} = \Hb\tp\W_Q\tp\W_K\Hb$. Now, we know that $\Hh = \Lam\Hb$ and $\U\Hh = \Hb$, thus $\Hb = \U\Lam\Hb$. By replacing this in the attention estimation we have 
    
    \begin{equation*}
        \Bar{\mathbf{A}} = \Hb\tp\Lam\tp\U\tp\W_Q\tp\W_K\U\Lam\Hb = \Hh\tp\U\tp\W_Q\tp\W_K\U\Hh.
    \end{equation*}

    Take note that $\U\tp\W_Q\tp\W_K\U$ is a $d \times d$ matrix. Now, if we define $\Wh_K = \U\tp\W_Q\tp\W_K\U$ and $\Wh_Q = \mathbf{I}_d$, where $\mathbf{I}_d$ is the identity matrix of size $d$, we have 
    $$\hat{\mathbf{A}} = \Hh\tp\Wh_Q\tp\Wh_K\Hh = \Hb\tp\W_Q\tp\W_K\Hb = \Bar{\mathbf{A}}.$$

    We have $\Q\llo = \W_Q\llo \Ho\llo$, $\K\llo = \W_K\llo \Ho\llo$ and if we define 
$\Bar{\Q}\llo = \W_Q\llo \Hb\llo$, $\Bar{\K}\llo = \W_K\llo \Hb\llo$. Very similar to what we saw in \ref{thrm:narrow_attention}, 
because of the operator norm of matrices $\W_Q$ and $\W_K$, for each $i$ we have $\norm{q_i\llo - \Bar{q}_i\llo} \leq \eps\llo \beta$ and $\norm{k_i\llo - \Bar{k}_i\llo} \leq \eps\llo \beta$. Also, we have $\norm{q\llo_i}, \norm{k\llo_i} \leq \beta\sqrt{\alpha}$, thus $\norm{\Bar{q}_i\llo}, \norm{\Bar{k}_i\llo} \leq \beta(\eps\llo + \sqrt{\alpha})$. Now, as we proved in \ref{thrm:narrow_attention} we have:

\begin{align*}
    \abs{q_i\llo\cdot k_j\llo - \Bar{q}_i\llo \cdot \Bar{k}_j\llo} \leq 2\sqrt{\alpha}\beta\eps\llo + (\beta\eps\llo)^2 \leq \beta(2\sqrt{\alpha}+\beta)\eps\llo.
\end{align*}

The last inequality is because we have $\eps\llo \leq 1$, which holds for a sufficiently small $\eps$, as we will see toward the end of the proof.

Now if we define $\eps_a = \beta(2\sqrt{\alpha}+\beta)\eps\llo$, we have:

\begin{gather*}
\ah_{ij} =\Bar{a}_{ij} = \frac{\exp(\qb_i \cdot \kb_j)}{\sum_{u \in \mathcal{N}_H(i)} \exp(\qb_i \cdot\kb_u)} \leq \frac{\exp(q_i\cdot k_j + \eps_a)}{\sum_{u \in \mathcal{N}_H(i)} \exp(q_i\cdot k_j -\eps_a)} \leq a_{ij}\exp(2\eps_a)
\\
\ah_{ij} = \Bar{a}_{ij} = \frac{\exp(\qb_i\cdot \kb_j)}{\sum_{u \in \mathcal{N}_H(i)} \exp(\qb_i\cdot \kb_u)} \geq \frac{\exp (q_i\cdot k_j -\eps_a)}{\sum_{u \in \mathcal{N}_H(i)} \exp(q_i \cdot k_u +\eps_a)} \geq a_{ij}\exp(-2\eps_a)
\end{gather*}

Take notice that if $2\eps_a < 1$, we have $\exp(2\eps_a) < 1 + 2\eps_a$ and $\exp(-2\eps_a) > 1 - \eps_a$, and for any $i, j$, $a\llo_{i,j} \leq 1$. Thus for any $i$ and $j$, $\frac{a\llo_{i,j}}{\ah\llo_{i,j}} = 1 + \bigO(\eps_a) = 1 + \bigO(\eps\llo)$. 

Now for compressing the $\W_V$, we start from $\Hb$ and we have $\Vb = \W_V\Hb = \W_V\U\Hh$. Now, according to the operator norm of $\W_V$, we know that $\max_i \norm{v_i - \vb_i} \leq \beta\eps\llo$. On the other hand since $\rank(\Hb) \leq d$, $\rank(\Vb) \leq d$. Thus there should be $\U_V \in \R^{D \times d}$ and $\Lam_V \in \R^{d \times D}$ that $\U_V \Lam_V \Vb = \Vb$. Let us define $\Vh = \Lam_V \Vb$. Then we have 
$$ \Vb = \W_V\U\Hh =  \U_V \Lam_V \W_V \U \Hh.$$

Now take $\Wh_V = \Lam_V \W_V \U$, and thus $\Wh_V \in \R^{d \times d}$. Also, $\Vh = \Wh_V\Hh$, and we have $\Vb = \U_V\Vh$, and thus $\max_i\norm{v_i - \U_V\vh_i} \leq \beta \eps\llo$.

Very similar to the proof in ~\cref{thrm:narrow_attention}, we can bound the $\norm{h_i\llh - \U_V\hh_i\llh}$: 

\begin{align*}
   \norm{h_i\llh - \U_V\hh_i\llh} &= \norm{\sum_{j\in Nei(i)} a_{ij}\llo v_j\llo - \ah_{ij}\U_V\vh_j\llh} \\
    & = \norm{\sum_{j\in Nei(i)} a_{ij}\llo v_j\llo - \ah_{ij}\llo v_j\llo + \ah_{ij}\llo v_j\llo - \ah_{ij}\U_V\vh_j\llo} \\
    &= \norm{(v_j\llo - \U_V\vh_j\llo) + v_j\llo\sum_{j\in Nei(i)} (a_{ij}\llo - \ah_{ij}\llo)} \\
    & \leq \norm{v_j\llo - \U_V\vh_j\llo} + \norm{v_j\llo}\sum \abs{a_{ij}\llo - \ah_{ij}\llo} \\
    & \leq \beta\eps\llo + \sqrt{\alpha} \sum \max (1-\exp(-2\eps_a), \exp(2\eps_a) -1) a_{ij}\llo \\
    & \leq \beta \eps\llo + \sqrt{\alpha} (\exp(2\eps_a) -1),
\end{align*} 

and since $1+x < \exp(x) < 1+2x$ for $0<x<1$, if we have $\eps_a < 1$, we have 
$$\norm{h_i\llh - \U_V\hh_i\llh} \leq \beta\eps\llo + 4 \sqrt{\alpha} \eps_a = \beta(1+ (8\alpha+\beta\sqrt{\alpha}))\eps\llo$$

For the convenience of writing we take $\eps_b = \beta(1+ (8\alpha+\beta\sqrt{\alpha}))\eps\llo$. For the feedforward network part, we know $\W_1$ has operator norm $\beta$ and $\sigma$ is 1-Lipschitz. Thus for each $i$ we have, $$\norm{\sigma(\W_1h_i\llh) - \sigma(\W_1\U_V\hh_i\llh)} \leq \beta \eps_b.$$
Now, we take $\Wh_1 = \W_1\U_V$ and this will give us $$\norm{\sigma(\W_1h_i\llh) - \sigma(\Wh_1\hh_i\llh)} \leq \beta \eps_b.$$ 
Also, we know that $\forall i: \norm{h\llt_i - \hb\llt_i} \leq \eps$, thus with the triangle inequality, we have $\forall i: \norm{\hh\llt_i - \hb\llt_i} \leq \eps + \beta\eps_b$. $\W\llo_2$ has an operator norm less than or equal to $\beta$, thus 
$$\forall i: \norm{\W\llo_2 \hb\llt_i - \W\llo_2 \hh\llt_i} \leq \beta\eps + \beta^2\eps_b.$$ 

Since $\rank(\Hb\llt) \leq d$, then $\rank(\W\llo_2\Hb) \leq d$. Thus $\W\llo_2\Hh\llt$ has a lower rank approximation with a column-wise maximum distance of $\beta\eps + \beta^2\eps_b$. According to the \cref{lem:low_rank_est_map}, we can have $\U\lln \in \R^{D \times d}$ and $\Lam\lln \in \R^{d \times D}$ that, $$\forall i: \norm{\W\llo_2\hh\llt_i - \U\lln\Lam\lln\W\llo_2\hh\llt_i} \leq \beta\eps + \beta^2\eps_b.$$ 
Now, take $\Wh\llo = \Lam\lln\W\llo_2$, and this will give us $\Hh\lln = \Wh\llo_2\Hh\llt \in \R^{d \times n}$ that 
$$\forall i: \norm{\U\lln\hh\lln_i - \W\llo_2\hh\llt_i} \leq \beta\eps + \beta^2\eps_b.$$ 

We also know that $\forall i: \norm{\W\llo_2 h\llt_i - \W\llo_2\hh\llt_i} \leq \beta^2\eps_b$, and $\Ho\lln = \W\llo_2 \Ho\llt$, thus:
$$\forall i: \norm{h\lln_i - \W\llo_2\hh\llt_i} \leq \beta^2\eps_b.$$

Combining the results with the triangle inequality we have:

$$\forall i: \norm{h\lln_i - \U\lln\hh\lln_i} \leq \beta\eps + 2\beta^2\eps_b = \beta\eps + \beta^3(1+ (8\alpha+\beta\sqrt{\alpha}))\eps\llo.$$

Thus, if we take $c_1 = \beta^3(1+ (8\alpha+\beta\sqrt{\alpha}))$ and $c_2 = \beta$, we have $c_1, c_2 = \bigO(1)$, and thus the lemma will be proven. This will prove the induction step.

Now, by induction since the assumptions are correct for the first layer. The assumptions also hold for the input and the assumptions being correct for a layer will result in it being correct for the following layer, for each layer $\eps\lln \leq c_1\eps\llo + c_2\eps$ and $\eps^{(0)} = \eps$. Since the number of layers is $L = \bigO(1)$, very similar to \cref{thrm:narrow_attention}, we have $\norm{\mathcal{T}(X)_i - \widehat{\mathcal{T}}(X)_i}_2 \leq \eps^{(L)} = \mathcal{O(\eps)}$ and $\frac{a_{ij}\llo}{\ah_{ij}\llo} = 1 + \mathcal{O}(\eps)$. \end{proof}

\subsubsection{Note on impossibility of the reduction using $U\Lam$ mapping} \label{note:impossible_activation} If we want to also decrease the dimension on the feed-forward layer, similar techniques we used to decrease the dimension of the linear mappings will fail due to the non-linearity of the activation function. 

Because $\rank(\Hb\llt) \leq d$, we can have $\U_\sigma \in \R^{D \times d}$ and $\Lam_\sigma \in \R^{d \times D}$ in a way that $\U_\sigma\Lam_\sigma\Hb\llt = \Hb\llt$. In this case, we construct $\U_\sigma$ and $\Lam_\sigma$ so that we can also reduce the size of $\W\llo_1$. To construct this, we choose $\Lam_\sigma$ to have each row as a 1-hot vector, selecting maximum $d$ rows from $\Hb\llt$ that any other rows in the $\Hb\llt$ can be constructed from a linear combination of these rows. In this construction, we have $$\Lam_\sigma \sigma(\W_1h\llh) = \sigma(\Lam_\sigma\W_1h\llh)$$, since $\Lam_\sigma$ just selects rows from the mapped value and the activation function is an element-wise function. Thus we can have $\Wh\llo_1 = \Lam_\sigma\W\llo_1 \U_V$. Furthermore, we have 
$$\norm{\Lam_\sigma\hb\llt_i - \hh\llt_i} \leq \eps + \beta^2\eps\llo + 4 \sqrt{\alpha}\beta \eps_a,$$
since the distance can not increase by selecting a subset of rows. Now, we have 
$$\norm{\U_\sigma (\Lam_\sigma\hb\llt_i - \hh\llt_i)} \leq \norm{\U_\sigma}_{op}(\eps + \beta^2\eps\llo + 4 \sqrt{\alpha}\beta \eps_a).$$ 

Now, if we can choose $\Lam_\sigma$ and $\U_\sigma$ in a way that $\U_\sigma$ has an $\bigO(1)$ operator norm this can give the contraction for the output of $\Wh_1\llo$ and input of $\Wh_2\llo$. Even a simpler condition that just preserves the distance with some constant around $\hb\llt_i$ vectors in a way that
$$\norm{\U_\sigma (\Lam_\sigma\hb\llt_i - \hh\llt_i)} \leq c\norm{\Lam_\sigma\hb\llt_i - \hh\llt_i}$$
for some constant $c$ will lead to a lower dimension feed-forward network. However, this is not always correct.

A counter-example to show that this is not always correct is that if $D = n$ and $\Hb \in \R^{D \times n}$ is a matrix with all elements equal to $1/\sqrt{D}$ and $\Ho = \Hb + I\eps$, then $\rank(\Hb) = 1$, but selecting just one row of $\Ho$ to estimate the whole matrix should use a constant, $c$ multiplication of that row for estimating all other rows. If this constant is one, then one column will have an error of $D\eps$, otherwise, the error will be at least $\max(\abs{1-c}, c+c\eps -1)D = \theta(\eps D)$. However, in the scenario that $c=1$, all columns except for one will have maximum distance $\bigO(\eps)$. However, it is also noteworthy that this counter-example is not a very likely thing to happen in the training of deep neural networks, at least in the presence of regularizers such as dropout and layer-norm we expect the $\eps$ distance between $\Hb$ and $\Ho$ to be more uniformly divided in the rows of $\Ho$.

\subsection{Proof of \cref{prop:leverage_score_sampling}}
\label{sec:proof_leverage_score_sampling}

\begin{definition}
    Given a matrix $\mathbf{A}$, the leverage score of the $i$-th row $a_i$ of $\mathbf{A}$ is defined to be $\ell_i := 
a_i(\mathbf{A}^\top \mathbf{A})^{\dagger} a_i^\top$, which is the squared $\ell_2$-norm of the $i$-th row of $\U$, where $A = \U\Sigma \mathbf{V}^T$ is the singular value decomposition of $\mathbf{A}$.
\end{definition}

It is known that sampling $O(d \log d/\eps^2)$ rows with respect to the leverage score of the matrix $[\mathbf{A}, \mathbf{b}]$ gives a $(1 \pm  \eps)$-subspace embedding of the column span of $[\mathbf{A}, \mathbf{b}]$, which means that the solution $x'$ of the regression problem $\min_x \norm{\mathbf{S}\mathbf{A} x - \mathbf{S}\mathbf{b}}_2$ satisfies  $\norm{\mathbf{A} x' - \mathbf{b}}_2 \le (1 \pm \eps) \min_x \norm{\mathbf{A} x - \mathbf{b}}_2$ (see, e.g., \cite{W14}). In the recent study of the active regression problem~\citep{CP19, MMWY22}, it turns out that sampling with respect to the leverage score of the matrix $\mathbf{A}$ itself is also sufficient to give a good solution to unknown label vector $\mathbf{b}$ with high constant probability.  

\begin{lemma}
    \label{lem:leverage_score_sampling}
    Given matrix $\mathbf{A} \in \R^{n \times d}$ and $\mathbf{b} \in \R^{n}$. Let $\mathbf{S}$ be the sampling and rescaling matrix with respect to $\ell_i(\mathbf{A})$ with $O(d \log d)$ rows. Let $x' = \mathrm{argmin}_x \norm{\mathbf{S}\mathbf{A}x - \mathbf{S}\mathbf{b}}_2$. Then we have with high constant probability, 
    \[
    \norm{\mathbf{A}x' - \mathbf{b}}_2 \le O(1) \cdot  \min_{x} \norm{\mathbf{A}x - \mathbf{b}}_2
    \]
\end{lemma}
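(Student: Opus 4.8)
The plan is to follow the standard active-regression argument. Split $\mathbf{b} = \mathbf{b}_\parallel + \mathbf{b}_\perp$, where $\mathbf{b}_\parallel$ is the orthogonal projection of $\mathbf{b}$ onto the column span of $\mathbf{A}$ and $\mathbf{b}_\perp$ is the residual, so that $\mathrm{OPT} := \min_x \norm{\mathbf{A}x - \mathbf{b}}_2 = \norm{\mathbf{b}_\perp}_2$ and $\mathbf{A}x^\star = \mathbf{b}_\parallel$ for the exact optimum $x^\star$. Since $\mathbf{A}(x'-x^\star) = \mathbf{A}x' - \mathbf{b}_\parallel$ lies in the column span of $\mathbf{A}$ and is hence orthogonal to $\mathbf{b}_\perp$, the Pythagorean identity gives $\norm{\mathbf{A}x'-\mathbf{b}}_2^2 = \norm{\mathbf{A}(x'-x^\star)}_2^2 + \mathrm{OPT}^2$, so it suffices to prove $\norm{\mathbf{A}(x'-x^\star)}_2 = \bigO(\mathrm{OPT})$.

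Two properties of the leverage-score sampling-and-rescaling matrix $\mathbf{S}$ with $m = \bigO(d\log d)$ rows drive the proof. (i) \emph{Subspace embedding}: with high constant probability, $\tfrac12\norm{\mathbf{A}y}_2 \le \norm{\mathbf{S}\mathbf{A}y}_2 \le \tfrac32\norm{\mathbf{A}y}_2$ for all $y$; this is the classical leverage-score guarantee (see, e.g., \cite{W14}), and it is precisely here that the extra $\log d$ factor beyond $\bigO(d)$ rows enters, through a matrix-Chernoff / coupon-collector bound. (ii) \emph{Approximate orthogonality of the cross term}: writing $\mathbf{A} = \U\Sigma\mathbf{V}\tp$ with $\U$ an orthonormal basis of the column span of $\mathbf{A}$ (so $\U\tp\mathbf{b}_\perp = 0$), I would show $\norm{\U\tp\mathbf{S}\tp\mathbf{S}\mathbf{b}_\perp}_2 = \bigO(\mathrm{OPT})$ with high constant probability. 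Here $\U\tp\mathbf{S}\tp\mathbf{S}\mathbf{b}_\perp$ is an average of $m$ i.i.d.\ mean-zero random vectors, and since row $i$ is sampled with probability exactly $p_i = \ell_i(\mathbf{A})/d = \norm{\U_{i,:}}_2^2/d$, the per-sample second moment equals $\sum_i \frac{\norm{\U_{i,:}}_2^2}{p_i}(\mathbf{b}_\perp)_i^2 = d\,\norm{\mathbf{b}_\perp}_2^2$, hence $\E\norm{\U\tp\mathbf{S}\tp\mathbf{S}\mathbf{b}_\perp}_2^2 = \tfrac{d}{m}\,\mathrm{OPT}^2 \le \mathrm{OPT}^2$, and Markov's inequality gives the bound. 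This step is exactly where sampling by leverage scores (rather than uniformly) is used: the factors $\norm{\U_{i,:}}_2^2/p_i$ collapse to the rank $d$, independent of how ``spiky'' $\mathbf{b}_\perp$ is.

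To combine these, note that optimality of $x'$ for the sketched problem means $\mathbf{S}\mathbf{A}x' = P_{\mathbf{S}\mathbf{A}}\,\mathbf{S}\mathbf{b}$, the projection onto the column span of $\mathbf{S}\mathbf{A}$; since $\mathbf{S}\mathbf{b}_\parallel = \mathbf{S}\mathbf{A}x^\star$ already lies in that span, $\mathbf{S}\mathbf{A}x' - \mathbf{S}\mathbf{b}_\parallel = P_{\mathbf{S}\mathbf{A}}\,\mathbf{S}\mathbf{b}_\perp$. Expanding this projection in the basis $\mathbf{S}\U$, using the subspace embedding to bound both $\norm{((\mathbf{S}\U)\tp\mathbf{S}\U)^{-1}}_{\mathrm{op}} \le 4$ and $\norm{\mathbf{S}\U z}_2 \le \tfrac32\norm{z}_2$, and recalling $(\mathbf{S}\U)\tp\mathbf{S}\mathbf{b}_\perp = \U\tp\mathbf{S}\tp\mathbf{S}\mathbf{b}_\perp$, we get $\norm{\mathbf{S}\mathbf{A}x' - \mathbf{S}\mathbf{b}_\parallel}_2 = \bigO\bigl(\norm{\U\tp\mathbf{S}\tp\mathbf{S}\mathbf{b}_\perp}_2\bigr) = \bigO(\mathrm{OPT})$. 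Finally, a last application of the subspace embedding gives $\norm{\mathbf{A}(x'-x^\star)}_2 \le 2\norm{\mathbf{S}\mathbf{A}(x'-x^\star)}_2 = 2\norm{\mathbf{S}\mathbf{A}x' - \mathbf{S}\mathbf{b}_\parallel}_2 = \bigO(\mathrm{OPT})$, and substituting into the Pythagorean identity completes the argument, after a union bound over the two events.

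I expect the main obstacle to be the cross-term (approximate-orthogonality) estimate and, in particular, making precise that it is the \emph{exactness} of the leverage-score distribution that controls its variance: uniform or merely ``oversampled'' probabilities would reintroduce a dependence on the mass distribution of $\mathbf{b}_\perp$. A secondary, more bookkeeping-type issue is invoking the correct $\bigO(d\log d)$-row subspace-embedding statement for leverage-score sampling, since the Proposition's bound $k = \bigO(d\log d)$ inherits that logarithmic factor (whereas the cross-term step alone would be fine with $\bigO(d)$ rows).
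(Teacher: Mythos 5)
Your proposal is correct, but it takes a different (and somewhat stronger) route than the paper. The paper's proof avoids the Pythagorean decomposition and the cross-term estimate entirely: it writes $\norm{\mathbf{A}x' - \mathbf{b}}_2 \le \norm{\mathbf{A}x^\star - \mathbf{b}}_2 + \norm{\mathbf{A}(x'-x^\star)}_2$, uses the subspace embedding to reduce the second term to $\norm{\mathbf{S}\mathbf{A}x' - \mathbf{S}\mathbf{A}x^\star}_2$, bounds that by $2\norm{\mathbf{S}(\mathbf{A}x^\star - \mathbf{b})}_2$ via optimality of $x'$ for the sketched problem, and then observes that $\E\norm{\mathbf{S}(\mathbf{A}x^\star-\mathbf{b})}_2^2 = \norm{\mathbf{A}x^\star - \mathbf{b}}_2^2$ for the (unbiased, rescaled) sampling sketch, finishing with Markov. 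That last expectation holds for \emph{any} sampling distribution with the standard $1/\sqrt{kp_i}$ rescaling, so in the paper's argument the leverage scores are needed only for the subspace embedding, not to control the fitting of the noise. Your argument instead bounds the projected quantity $\norm{\U\tp\mathbf{S}\tp\mathbf{S}\mathbf{b}_\perp}_2$, whose variance computation genuinely uses $p_i = \norm{\U_{i,:}}_2^2/d$; this is the machinery you would need to get a $(1+\eps)$-approximation, and it is correct, but it is more than the stated $O(1)$ guarantee requires. The trade-off is clear: the paper's route is more elementary and distribution-agnostic in the Markov step, while yours isolates exactly where exact leverage-score sampling matters and would generalize to the sharper multiplicative bound.
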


\begin{proof}

    Since $\mathbf{S}$ is the sampling and rescaling matrix with respect to $\ell_i(\mathbf{A})$ with $O(d \log d)$ rows, we know with high constant probability, $\mathbf{S}$ is a $O(1)$-subspace embedding of $\mathbf{A}$, which means for all $x \in \R^d$, we have $\norm{\mathbf{S}\mathbf{A}x}_2 = (1 \pm \eps) \norm{\mathbf{A}x}_2$ (see, e.g., \citep{W14}).

    Now, let $x_c = \mathrm{argmin}_{x \in \R^d} \|\mathbf{S}\mathbf{A}x - \mathbf{S}\mathbf{b}\|_2$ and $x^\star = \mathrm{argmin}_{x \in \R^d} \|\mathbf{A}x - \mathbf{b}\|_2$, we have 
\[
\|\mathbf{A}x_c - \mathbf{b}\|_2 \le \|\mathbf{A}x_c - \mathbf{A}x^\star\|_2 + \|\mathbf{A}x^\star - \mathbf{b}\|_2 \le  \|\mathbf{A}x^\star - \mathbf{b}\|_2 + O(\|\mathbf{S}\mathbf{A}x_c - \mathbf{S}\mathbf{A}x^\star\|_2) \;.
\]
Also we have that 
\[
\|\mathbf{S}\mathbf{A}x_c - \mathbf{S}\mathbf{A}x^\star\|_2 \le \|\mathbf{S}\mathbf{A}x_c - \mathbf{S}\mathbf{b}\|_2 + \|\mathbf{S}\mathbf{b} - \mathbf{S}\mathbf{A}x^\star\|_2 \le 2 \|\mathbf{S}\mathbf{b} - \mathbf{S}\mathbf{A}x^\star\|_2 \;,
\]
The only remaining thing is to bound $\|\mathbf{S}\mathbf{b} - \mathbf{S}\mathbf{A}x^\star\|_2$. In fact, let $z = \mathbf{S}(\mathbf{A}x^\star - \mathbf{b})$, we have that 
\[
\mathbb{E}\left[\|\mathbf{S}(\mathbf{A}x^\star-\mathbf{b}) \|_2^2\right] = \sum_i \mathbb{E} [z_i^2] = \frac{n}{k} \sum_{i = 1}^k \sum_{j = 1}^n \frac{1}{n} (\mathbf{A}x^\star_j - \mathbf{b})^2 =\norm{\mathbf{A}x^\star - \mathbf{b}}_2^2
\]

Since we have that $\mathbb{E}\left[\|\mathbf{S}\mathbf{A}x^\star - \mathbf{S}\mathbf{b}\|_2^2\right] = \|\mathbf{A}x^\star - \mathbf{b}\|_2^2$, then by Markov's inequality we have that with high constant probability, $\|\mathbf{S}\mathbf{A}x^\star - \mathbf{S}\mathbf{b}\|_2^2 \le O(1) \|\mathbf{A}x^\star - \mathbf{b}\|_2^2$, which means that $\|\mathbf{S}\mathbf{A}x_c - \mathbf{S}\mathbf{A}x^\star\|_2 \le O(\|\mathbf{A}x^\star - \mathbf{b}\|_2)$. Put everything together and by taking a union bound, we have that with high constant probability
\[
\|\mathbf{A}x_c - \mathbf{b}\|_2 \le C \|\mathbf{A}x^\star - \mathbf{b}\|_2
\]
for some constant $C$. 
\end{proof}

\begin{proof}[Proof of the proposition]
    Since $\rank(\Ho') = d$, we can assume $\Ho' = \mathbf{A}\mathbf{B}$ where $\mathbf{A} \in \R^{D \times d}$ and $\mathbf{B} \in \R^{d \times n}$. Consider the regression problem $\min_{x} \norm{\mathbf{A}x - h_i}_2$, from $\norm{\mathbf{A}b_i - h_i}_2 = \norm{h'_i - h_i}_2 \le \eps$ we have $\min_{x} \norm{\mathbf{A}x - h_i}_2 \le \eps$.

    Let $\mathbf{S}$ be the sampling and rescaling matrix with respect  $\ell_i(\mathbf{A})$ with $O(d \log d)$ rows. From Lemma~\ref{lem:leverage_score_sampling} we have with probability at least $0.99$ we have $\min_{x} \norm{\mathbf{S}\mathbf{A}x - \mathbf{S}h_i}_2 \le C \min_{x} \norm{\mathbf{A}x - h_i}_2 \le C\eps$ and here $x^i = \mathrm{argmin}_{x} \norm{\mathbf{S}\mathbf{A}x - \mathbf{S}h_i}_2 = (\mathbf{S} \mathbf{A})^{-}\mathbf{S}h_i$. Let $I_\mathbf{S}$ denote the indices in $[n]$ where this event happens for $h_i$. Then we have $\mathbb{E}\left[|I_{\mathbf{S}}|\right] \ge 0.99n$, which means that there exists one $\mathbf{S}$ which makes $|I| \ge 0.99n$. %

    Now, taking this $\mathbf{S}$ and let $\U = \mathbf{A}(\mathbf{S} \mathbf{A}^{-})$, we have for every $i \in I_{\mathbf{S}}$, 
    \[
    \norm{\U \mathbf{S} h_i - h_i}_2 = \norm{\mathbf{A}(\mathbf{S} \mathbf{A}^{-})h_i - h_i} = \norm{\mathbf{S}\mathbf{A}x - \mathbf{S}h_i}_2 \le \bigO(\eps)\;.
    \]
    This implies the matrix $\U$ and $\mathbf{S}$ is what we need. Note that the matrix $\mathbf{S}$ is a diagonal matrix but each diagonal entry has a rescaling weight, but we can put the weights into the row of $\U$ and make $\mathbf{S}$ as a row selection matrix (where each non-zero entry has value $1$).

\end{proof}

\subsection{Proof of \cref{thrm:narrow_network_cluster}}
\label{sec:proof_narrow_network_cluster}

\begin{lemma}
\label{lem:attention_pool_closeness}
    If in an attention mechanism we have $\norm{v_i}\leq \eta$ for some $\eta=\bigO(1)$, $\exp(-c_1 \eps) a_{ij}  \leq \ah_{ij} \leq \exp(c_1 \eps) a_{ij}$, $\forall i,j: \abs{ v_i\cdot v_j - \vh_i\cdot \vh_j} < c_2\eps$, for some $c_1, c_2 = \bigO(\eps)$, and we have $h_i = \sum_{u \in \mathcal{N}(i)}a_{iu}v_u$ and $\hh_i = \sum_{u \in \mathcal{N}(i)}\ah_{iu}\vh_u$, then there is some $t\in \bigO(1)$ that $\abs{ h_i\cdot h_j - \hh_i\cdot \hh_j} < t\eps$.
\end{lemma}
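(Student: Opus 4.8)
The plan is to expand both inner products using bilinearity of the dot product into double sums over the attention neighborhoods, and then run a two‑step hybrid argument that swaps $\vh$ for $v$ and $\ah$ for $a$ one factor at a time, controlling each swap with one of the two given hypotheses.

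Concretely, write $h_i \cdot h_j = \sum_{u \in \mathcal{N}(i)} \sum_{w \in \mathcal{N}(j)} a_{iu} a_{jw} (v_u \cdot v_w)$ and likewise $\hh_i \cdot \hh_j = \sum_{u \in \mathcal{N}(i)} \sum_{w \in \mathcal{N}(j)} \ah_{iu} \ah_{jw} (\vh_u \cdot \vh_w)$. For each pair $(u,w)$ I would split the difference of the corresponding terms as
$$ a_{iu}a_{jw}\big[(v_u\cdot v_w) - (\vh_u\cdot\vh_w)\big] + (\vh_u\cdot\vh_w)\big[a_{iu}a_{jw} - \ah_{iu}\ah_{jw}\big].$$
The first bracket has absolute value at most $c_2\eps$ by hypothesis, so since $a_{iu},a_{jw}\ge 0$ and $\sum_{u} a_{iu}=\sum_{w} a_{jw}=1$, the contribution of the first terms to the whole double sum is at most $c_2\eps$. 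For the second terms, the multiplicative closeness of the attention scores gives $\exp(-2c_1\eps)\,a_{iu}a_{jw} \le \ah_{iu}\ah_{jw} \le \exp(2c_1\eps)\,a_{iu}a_{jw}$, hence $\abs{a_{iu}a_{jw} - \ah_{iu}\ah_{jw}} \le (\exp(2c_1\eps)-1)\,a_{iu}a_{jw} \le 4c_1\eps\,a_{iu}a_{jw}$ once $\eps$ is small enough that $2c_1\eps<1$ (using $\exp(x)<1+2x$ on $(0,1)$). Combining this with a uniform bound $\abs{\vh_u\cdot\vh_w}\le\eta^2+c_2\eps=\bigO(1)$ — obtained from Cauchy–Schwarz after noting $\norm{\vh_u}^2 \le \norm{v_u}^2 + c_2\eps \le \eta^2+c_2\eps$ by applying the value‑dot‑product hypothesis with $w=u$ — and again $\sum_{u,w} a_{iu}a_{jw}=1$, the contribution of the second terms is also $\bigO(\eps)$. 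Adding the two pieces and applying the triangle inequality over the double sum yields $\abs{h_i\cdot h_j - \hh_i\cdot\hh_j} \le c_2\eps + 4c_1\eps(\eta^2+c_2\eps)$, which is $t\eps$ for a constant $t=\bigO(1)$ depending only on $\eta, c_1, c_2$.

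I do not expect a genuine obstacle here; the argument is routine once the double‑sum expansion and the hybrid split are set up. The only points requiring a little care are (i) that the hypotheses directly control dot products $v_u\cdot v_w$ and the attention weights but not the norms $\norm{\vh_u}$, so one must instantiate the value‑dot‑product bound at $w=u$ to control $\abs{\vh_u\cdot\vh_w}$ before bounding the second family of terms, and (ii) fixing the smallness threshold on $\eps$ (e.g. $2c_1\eps<1$) at the outset so that the linearization $\exp(2c_1\eps)-1\le 4c_1\eps$ and the earlier $\exp$-bounds remain valid throughout.
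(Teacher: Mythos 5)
Your proposal is correct and uses the same core strategy as the paper's proof: expand both $h_i\cdot h_j$ and $\hh_i\cdot\hh_j$ as double sums over the neighborhoods, exploit $\sum_u a_{iu}=1$, apply the multiplicative attention bound and the additive value-dot-product bound term by term. The difference is purely in the bookkeeping. The paper pushes the multiplicative factor $\exp(\pm 2c_1\eps)$ through the entire sum, obtaining $\exp(-2c_1\eps)(h_i\cdot h_j - c_2\eps) < \hh_i\cdot\hh_j < \exp(2c_1\eps)(h_i\cdot h_j + c_2\eps)$, and then performs a slightly awkward case analysis on the sign of $h_i\cdot h_j \pm c_2\eps$ to linearize $\exp(\pm 2c_1\eps)$. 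Your hybrid/telescoping split $a_{iu}a_{jw}[(v_u\cdot v_w)-(\vh_u\cdot\vh_w)] + (\vh_u\cdot\vh_w)[a_{iu}a_{jw}-\ah_{iu}\ah_{jw}]$ bounds the difference symmetrically in one pass, avoids the sign cases entirely, and makes the need for the auxiliary bound $\abs{\vh_u\cdot\vh_w}\le\eta^2+c_2\eps$ (obtained by setting $w=u$ in the dot-product hypothesis) explicit where the paper leaves it implicit. The one caveat you flag — fixing $2c_1\eps<1$ so the linearization $\exp(2c_1\eps)-1\le 4c_1\eps$ applies — is exactly the same smallness assumption the paper also makes. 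So: same proof, cleaner packaging.
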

\begin{proof}
    For any pair $i, j$, we have:

    \begin{align*}
        \hh_i\cdot \hh_j &= \sum_{u \in \mathcal{N}(i)} \ah_{iu} \vh_u \sum_{\nu \in \mathcal{N}(j)} \ah_{j\nu} \vh_{\nu}\\
        &= \sum_{u, \nu \in \mathcal{N}(i), \mathcal{N}(j)} \ah_{iu} \ah_{j\nu} (\vh_u \cdot\vh_{\nu})\\
        & < \sum_{u, \nu \in \mathcal{N}(i), \mathcal{N}(j)} \exp(2c_1\eps) a_{iu} a_{j\nu} (v_u \cdot v_{\nu} + c_2\eps)\\
        &= \exp(2c_1\eps) \sum_{u, \nu \in \mathcal{N}(i), \mathcal{N}(j)} a_{iu} a_{j\nu} (v_u \cdot v_{\nu} + c_2\eps)\\
        &= \exp(2c_1\eps) \left(\left(\sum_{u, \nu \in \mathcal{N}(i), \mathcal{N}(j)} a_{iu} a_{j\nu} v_u \cdot v_{\nu}\right) + c_2\eps \left(\sum_{u, \nu \in \mathcal{N}(i), \mathcal{N}(j)} a_{iu} a_{j\nu} \right)\right)\\
        &=  \exp(2c_1\eps) \left(\left(\sum_{u, \nu \in \mathcal{N}(i), \mathcal{N}(j)} a_{iu} a_{j\nu} v_u \cdot v_{\nu}\right) + c_2\eps \left(\sum_{u \in \mathcal{N}(i)} a_{iu} \sum_{\nu \in \mathcal{N}(j)}a_{j\nu} \right)\right)\\
        &=  \exp(2c_1\eps) \left(\left(\sum_{u, \nu \in \mathcal{N}(i), \mathcal{N}(j)} a_{iu} a_{j\nu} v_u \cdot v_{\nu}\right) + c_2\eps  \right)\\
        &=  \exp(2c_1\eps) \left(h_i\cdot h_j + c_2\eps\right)\\
    \end{align*}

    Now, similarly, by lower bounding the attention scores and dot products of the $\V$ vectors from the smaller network, we will have:
    \[
    \hh_i\cdot \hh_j > \exp(-2c_1\eps) \left(h_i\cdot h_j - c_2\eps\right)
    \]
    Since $1+x < \exp(x) < 1+2x $ for $0 < x \leq 1$ and $2c_1\eps < 1$, and, 
    
    \begin{align*}
        h_i\cdot h_j &= \sum_{u, \nu \in \mathcal{N}(i), \mathcal{N}(j)} a_{iu} a_{j\nu} (v_u \cdot v_{\nu})\\
        &\leq \sum_{u, \nu \in \mathcal{N}(i), \mathcal{N}(j)} a_{iu} a_{j\nu} \eta \\
        & = \eta,
    \end{align*}
    
    If $h_i\cdot h_j + \alpha\eps > 0$,
    \begin{align*}
        \hh_i\cdot \hh_j &< (1+8\alpha\eps) \left(h_i\cdot h_j + \alpha\eps\right)\\
        &=  h_i\cdot h_j + \eps \left(8\alpha(h_i\cdot h_j) + 8\alpha^2\eps + \alpha \right)\\
        & \leq h_i\cdot h_j + \eps \left(8\alpha^2 + \alpha + 8\alpha^2\eps  \right)
    \end{align*}
    Otherwise,
    \begin{align*}
        \hh_i\cdot \hh_j &< (1+4\alpha\eps) \left(h_i\cdot h_j + \alpha\eps\right)\\
        &=  h_i\cdot h_j + \eps \left(4\alpha^2 + 4\alpha^2\eps + \alpha \right).
    \end{align*}
    
    For the lower bound we have $1-x \leq \exp(-x) \leq 1-x/2$ for $0 < x < 1$, and,
    \begin{align*}
        h_i\cdot h_j &= \sum_{u, \nu \in \mathcal{N}(i), \mathcal{N}(j)} a_{iu} a_{j\nu} (v_u \cdot v_{\nu})\\
        &\geq \sum_{u, \nu \in \mathcal{N}(i), \mathcal{N}(j)} a_{iu} a_{j\nu} -\alpha \\
        & = -\alpha,
    \end{align*}
    
    Now, if $h_i\cdot h_j - \alpha\eps > 0$,
    \begin{align*}
        \hh_i\cdot \hh_j &> (1-2c_1\eps) \left(h_i\cdot h_j - \alpha\eps\right)\\
        &=  h_i\cdot h_j - \eps \left(-4\alpha(h_i\cdot h_j) + 4\alpha^2\eps - \alpha \right)\\
        & > h_i\cdot h_j - \eps \left(4\alpha^2 + 4\alpha^2\eps - \alpha \right)
    \end{align*}

    Otherwise:

    \begin{align*}
        \hh_i\cdot \hh_j &> (1-2\eps\alpha) \left(h_i\cdot h_j - \alpha\eps\right)\\
        &=  h_i\cdot h_j - \eps \left(-2\alpha(h_i\cdot h_j) + 2\alpha^2\eps - \alpha \right)\\
        & > h_i\cdot h_j - \eps \left(2\alpha^2 + 2\alpha^2\eps - \alpha \right)
    \end{align*}

    since $\alpha$ is a positive value, if we get $t = 8\alpha^2 + \alpha + 8\alpha^2\eps$, we have the proof for the lemma.
\end{proof}

\begin{lemma}
\label{lem:attention_layer_JL}
    Assume for each $i$, $\norm{q_i^{(1)}}, \norm{k_i^{(1)}}, \norm{v_i^{(1)}} \leq \sqrt{\alpha}$, and $d = \mathcal{O}(\frac{\log n}{\eps^2})$ for any $0 < \eps < \frac{1}{8\alpha}$. For the first Transformer layer's attention, there exist $\Qh^{(1)}, \Kh^{(1)},\Vh^{(1)} \in \R^{d \times D}$, such that for any pair $i, j$, $h_i^{(1/2)}\cdot h_j^{(1/2)} - t\eps \leq \widehat{h}_i^{(1/2)}\cdot \widehat{h}_j^{(1/2)} \leq h_i^{(1/2)} \cdot h_j^{(1/2)} + t\eps$ for a constant $t = \bigO(1)$.
\end{lemma}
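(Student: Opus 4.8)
Here is how I would prove \cref{lem:attention_layer_JL}.

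The plan is to produce $\Qh^{(1)},\Kh^{(1)},\Vh^{(1)}$ from a \emph{single} Johnson--Lindenstrauss map and then invoke two facts already established in the excerpt: the softmax-perturbation estimate used in the proof of \cref{thrm:narrow_attention}, and \cref{lem:attention_pool_closeness}. First I would collect the at most $3n$ vectors $\{q_i^{(1)}\}_i\cup\{k_i^{(1)}\}_i\cup\{v_i^{(1)}\}_i$, all of norm at most $\sqrt{\alpha}$, and note which pairwise inner products matter: the $q_i^{(1)}\cdot k_u^{(1)}$ with $u\in\mathcal{N}(i)$ (at most $m\le n^2$ of these, controlling the attention scores) and the $v_u^{(1)}\cdot v_\nu^{(1)}$ with $u\in\mathcal{N}(i),\,\nu\in\mathcal{N}(j)$ (at most $n^2$, controlling the pooled outputs through \cref{lem:attention_pool_closeness}). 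Applying \cref{cor:jltdot} with $\gamma=\alpha$ and a union bound over these $\bigO(n^2)$ pairs, there is, for $d=\bigO(\log n/\eps^2)$, a single $\M\in\R^{d\times D}$ with $\abs{x^\top\M^\top\M y-x^\top y}<\eps\alpha$ for every such pair. I would then set $\Qh^{(1)}:=\M\W_Q^{(1)}$, $\Kh^{(1)}:=\M\W_K^{(1)}$, $\Vh^{(1)}:=\M\W_V^{(1)}$ (each in $\R^{d\times D}$), which gives $\qh_i^{(1)}=\M q_i^{(1)}$, $\kh_i^{(1)}=\M k_i^{(1)}$, $\vh_i^{(1)}=\M v_i^{(1)}$, so that every controlled inner product is literally of the form $x^\top\M^\top\M y$.

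With this map in hand, the hypotheses of \cref{lem:attention_pool_closeness} fall into place. For the attention scores, $\abs{\qh_i^{(1)}\cdot\kh_j^{(1)}-q_i^{(1)}\cdot k_j^{(1)}}<\eps\alpha$ together with the softmax manipulation from \cref{thrm:narrow_attention} (numerator up by a factor at most $\exp(\eps\alpha)$, normalizing sum smaller by a factor at most $\exp(\eps\alpha)$) yields $\exp(-2\alpha\eps)\,a_{ij}^{(1)}\le\ah_{ij}^{(1)}\le\exp(2\alpha\eps)\,a_{ij}^{(1)}$, i.e.\ the ratio hypothesis with $c_1=2\alpha=\bigO(1)$; the condition $2c_1\eps<1$ required there is exactly what $\eps<\tfrac{1}{8\alpha}$ guarantees, with slack. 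For the values, the map already gives $\abs{\vh_i^{(1)}\cdot\vh_j^{(1)}-v_i^{(1)}\cdot v_j^{(1)}}<\eps\alpha$ for all $i,j$, the second hypothesis with $c_2=\alpha=\bigO(1)$, and $\norm{v_i^{(1)}}\le\sqrt{\alpha}=\bigO(1)$ supplies the bound $\eta$. Since $h_i^{(1/2)}=\sum_{u\in\mathcal{N}(i)}a_{iu}^{(1)}v_u^{(1)}$ and $\hh_i^{(1/2)}=\sum_{u\in\mathcal{N}(i)}\ah_{iu}^{(1)}\vh_u^{(1)}$, \cref{lem:attention_pool_closeness} immediately gives $\abs{h_i^{(1/2)}\cdot h_j^{(1/2)}-\hh_i^{(1/2)}\cdot\hh_j^{(1/2)}}<t\eps$ for some $t=\bigO(1)$, which is the asserted two-sided bound.

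I do not expect a deep obstacle here: the lemma is essentially an assembly of \cref{cor:jltdot}, the softmax-perturbation bound of \cref{thrm:narrow_attention}, and \cref{lem:attention_pool_closeness}. The one point that takes a moment of care is the first step --- recognizing that the output-level guarantee forces us to preserve not only the query--key products appearing in the softmaxes but \emph{every} value--value product appearing in the double sum inside the proof of \cref{lem:attention_pool_closeness}, and then checking that the total number of relevant pairs is still $\bigO(n^2)$, so that a single $\M$ with $d=\bigO(\log n/\eps^2)$ and one union bound suffice. The remainder --- tracking the constants $\eta,c_1,c_2$ and confirming $2c_1\eps<1$ under $\eps<\tfrac{1}{8\alpha}$ --- is routine bookkeeping.
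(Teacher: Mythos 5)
Your proof is correct and follows essentially the same route as the paper's: JL-project to obtain $\abs{\qh_i^{(1)}\cdot\kh_j^{(1)}-q_i^{(1)}\cdot k_j^{(1)}}<\alpha\eps$ and $\abs{\vh_i^{(1)}\cdot\vh_j^{(1)}-v_i^{(1)}\cdot v_j^{(1)}}<\alpha\eps$, convert the former into the multiplicative $\exp(\pm 2\alpha\eps)$ bound on attention scores via the softmax perturbation argument from \cref{thrm:narrow_attention}, and then invoke \cref{lem:attention_pool_closeness}. The only cosmetic difference is that you use a single JL matrix $\M$ for all of $Q,K,V$ with one union bound over $\bigO(n^2)$ pairs, whereas the paper reuses the $\M_{QK}$ from \cref{thrm:narrow_attention} for queries and keys and introduces a separate $\M_V$ for the values; both choices are valid and give the same asymptotics.
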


\begin{proof}
    As part of proof for Theorem~\ref{thrm:narrow_attention}, we saw that for the narrow network in the first layer we have $\exp(-2\alpha \eps) a_{ij}^{(1)}  \leq \ah_{ij}^{(1)} \leq \exp(2\alpha \eps) a_{ij}^{(1)}$. Now, if we consider $n$ vectors from $\V^{(1)}$, by the JL-Transform, we know there exists a linear map $f_V$, with weight matrix $\mathbf{M}_V \in \R^{d \times D}$, such that:
    $$\forall i,j: \abs{v_i^{(1)}\cdot v_j^{(1)} - f(v_i^{(1)})\cdot f(v_j^{(1)})} < \alpha\eps.$$

    Now, if we consider $\wh{}{1} = \mathbf{M}_V \w{}{1}$, we will have:
    $$\forall i,j: \abs{v_i^{(1)}\cdot v_j^{(1)} - \vh_i^{(1)}\cdot \vh_j^{(1)}} < \alpha\eps.$$
    Now, since the conditions of the \cref{lem:attention_pool_closeness} are correct here, we have the proof of the lemma. 
\end{proof}

\begin{proof}[Proof of the Theorem]
If $d = \bigO(\frac{\log n}{\eps^2})$ we saw from \cref{lem:attention_layer_JL} that we can have $\Q^{(1)}, \K^{(1)}$, and $\V^{(1)}$ all in $\R^{d \times n}$ in a way that for each $i, j$ we have $\abs{\hh^{1/2}_i \cdot \hh^{1/2}_j - h^{1/2}_i \cdot h^{1/2}_j} < t\eps$ for some constant $t = \bigO(1)$.

If input matrix $\X$ has a low-rank approximation $\Bar{X}$ such that $\rank(\Bar{X}) \leq d$ and for each $i$, $\norm{X_i - \Bar{X}_i} \leq \eps$, we will prove the following statement first:

\begin{lemma}
\label{lem:dot_product_low_rank}
    If $\Hh\llo \in \R^{d \times n}$, is the input vector of a narrow Transformer layer in a way that there exists a $\U\llo \in \R^{D \times d}$ that $\forall i: \norm{\U\llo\hh\llo_i - h\llo_i} \leq \eps\llo$ for a sufficiently small $\eps\llo$, we can have $\Wh_Q, \Wh_K$, and $\Wh_V \in \R^{d \times d}$ that the following conditions hold:
    \begin{enumerate}
        \item $\forall i, j: h\llh_i\cdot h\llh_j - t\eps\llo \leq \hh_i\llh \cdot \hh_j\llh \leq h_i\llh \cdot h_j\llh + t\eps\llo$ for some $t = \bigO(1)$.
        \item For any attention score $\frac{a_{ij}\llo}{\ah_{ij}\llo} = 1 + \mathcal{O}(\eps\llo)$.
    \end{enumerate}
\end{lemma}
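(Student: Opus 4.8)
The plan is to reuse the query/key/value compression from the proof of Theorem~\ref{thrm:narrow_network_low_rank_2}, with $\Hb\llo := \U\llo\Hh\llo$ playing the role of the low-rank matrix that is column-wise $\eps\llo$-close to $\Ho\llo$ (so $\rank(\Hb\llo)\le d$ and $\norm{\hb\llo_i - h\llo_i}\le\eps\llo$ for every $i$), and then to feed the resulting attention-score and value estimates into Lemma~\ref{lem:attention_pool_closeness}. Dropping the $\llo$ superscripts where harmless, I set $\Wh_Q := \mathbf{I}$ and $\Wh_K := \U\tp\W_Q\tp\W_K\U\in\R^{d\times d}$, so that the narrow layer computes $\qh_i\cdot\kh_j = \hh_i\tp\U\tp\W_Q\tp\W_K\U\hh_j = (\W_Q\hb_i)\cdot(\W_K\hb_j) =: \qb_i\cdot\kb_j$. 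The operator-norm/triangle-inequality estimate already used in Theorem~\ref{thrm:narrow_network_low_rank_2} gives $\abs{q_i\cdot k_j - \qb_i\cdot\kb_j}\le 2\sqrt\alpha\beta\eps\llo + (\beta\eps\llo)^2 \le \eps_a := \beta(2\sqrt\alpha+\beta)\eps\llo = \bigO(\eps\llo)$, hence $\exp(-2\eps_a)\,a_{ij}\le\ah_{ij}\le\exp(2\eps_a)\,a_{ij}$; for sufficiently small $\eps\llo$ this is exactly Part~2, $a_{ij}\llo/\ah_{ij}\llo = 1+\bigO(\eps\llo)$, and it simultaneously supplies the multiplicative attention-score hypothesis of Lemma~\ref{lem:attention_pool_closeness}.

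For the value map I would deviate slightly from Theorem~\ref{thrm:narrow_network_low_rank_2}: instead of an arbitrary factorization, let $\mathbf{P}\in\R^{D\times d}$ have orthonormal columns whose span contains the column space of $\Vb := \W_V\Hb = \W_V\U\Hh$ (which has rank $\le d$; pad with extra orthonormal columns if needed), and set $\Wh_V := \mathbf{P}\tp\W_V\U\in\R^{d\times d}$, so the narrow layer's value is $\vh_i = \Wh_V\hh_i = \mathbf{P}\tp\vb_i$. Since $\vb_i,\vb_j$ lie in the column span of $\mathbf{P}$ we have $\vb_i = \mathbf{P}\vh_i$ and therefore the exact identity $\vh_i\cdot\vh_j = \vb_i\cdot\vb_j$. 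Combining $\norm{v_i-\vb_i}\le\beta\eps\llo$, $\norm{v_i}\le\beta\sqrt\alpha$, $\norm{\vb_i}\le\beta(\sqrt\alpha+\eps\llo)$ with a triangle inequality yields $\abs{v_i\cdot v_j - \vh_i\cdot\vh_j} = \abs{v_i\cdot v_j - \vb_i\cdot\vb_j}\le c_2\eps\llo$ for a constant $c_2 = \bigO(1)$, i.e.\ the dot-product hypothesis of Lemma~\ref{lem:attention_pool_closeness}, while $\norm{v_i}\le\beta\sqrt\alpha = \bigO(1)$ is its norm hypothesis.

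With all three hypotheses in place, Lemma~\ref{lem:attention_pool_closeness} applied to $h\llh_i = \sum_{u\in\mathcal{N}(i)}a_{iu}\llo v_u\llo$ and $\hh\llh_i = \sum_{u\in\mathcal{N}(i)}\ah_{iu}\llo\vh_u\llo$ (with its $\eps$ taken to be $\eps\llo$ and its $c_1,c_2,\eta$ all $\bigO(1)$) produces a constant $t=\bigO(1)$ with $\abs{h\llh_i\cdot h\llh_j - \hh\llh_i\cdot\hh\llh_j} < t\eps\llo$ for all $i,j$, which is Part~1. All three weight matrices $\Wh_Q,\Wh_K,\Wh_V$ are $d\times d$ by construction, and the small-$\eps\llo$ conditions invoked above (e.g.\ $\eps\llo\le 1$ and $2\eps_a<1$) are covered by the ``sufficiently small $\eps\llo$'' in the statement.

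The only non-routine step is the value map. We need a genuinely $d\times d$ weight $\Wh_V$ whose outputs $\vh_i\in\R^d$ have pairwise dot products within $\bigO(\eps\llo)$ of the $v_i\in\R^D$; a Johnson--Lindenstrauss embedding would cost $\Omega(\log n/\eps^2)$ coordinates, which can exceed the target $d$. Passing instead to coordinates in an orthonormal basis of the ($\le d$)-dimensional column span of $\Vb$ makes this map an isometry on that subspace, so the entire error is the already-controlled $v$-versus-$\vb$ gap; everything else is the bookkeeping already carried out for Theorems~\ref{thrm:narrow_attention} and~\ref{thrm:narrow_network_low_rank_2}.
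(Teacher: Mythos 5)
Your proof is correct and follows essentially the same route as the paper: set $\Wh_Q = \mathbf{I}_d$, $\Wh_K = \U\tp\W_Q\tp\W_K\U$, reuse the operator-norm bookkeeping from Theorem~\ref{thrm:narrow_network_low_rank_2} for the attention scores, construct a $d\times d$ value map whose output dot products track $v_i\cdot v_j$ up to $\bigO(\eps\llo)$, and conclude via Lemma~\ref{lem:attention_pool_closeness}. Your value-map construction is actually a small improvement over the paper's: the paper writes $\Wh_V = \U\tp\W_V\tp\W_V\U$, which as stated does \emph{not} give $\vh_i\cdot\vh_j = \hh_i\tp\U\tp\W_V\tp\W_V\U\hh_j$ (one would need $\Wh_V\tp\Wh_V = \U\tp\W_V\tp\W_V\U$, i.e.\ a factorization, not the matrix itself); your choice $\Wh_V = \mathbf{P}\tp\W_V\U$ with $\mathbf{P}$ an orthonormal basis for the column span of $\W_V\U$ makes $\vh_i\cdot\vh_j = \vb_i\cdot\vb_j$ hold exactly and so repairs this step cleanly.
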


\begin{proof}[Proof of \cref{lem:dot_product_low_rank}:]
In this proof, we will occasionally emit the $\ell$ superscripts, since all the vectors are in the same layer.

According to \cref{thrm:narrow_network_low_rank_2}, we can construct $\Wh_Q$ and $\Wh_K \in \R^{d \times d}$ that:
$\exp(-2\eps_a) \leq \frac{\ah_{ij}}{a_{ij}} \leq \exp(2\eps_a)$, where $\eps_a = \beta(2\sqrt{\alpha}+\beta)\eps\llo$. And for a small enough $\eps\llo$ we have $2\eps_a < 1$, thus $\exp(2\eps_a) < 1 + 2\eps_a$ and $\exp(-2\eps_a) > 1 - \eps_a$, and for any $i, j$, $a\llo_{i,j} \leq 1$. Thus for any $i$ and $j$, $\frac{a\llo_{i,j}}{\ah\llo_{i,j}} = 1 + \bigO(\eps_a) = 1 + \bigO(\eps\llo)$. 

For the $\Wh_V$, we want to make it in a way that $\forall i, j: \abs{\vh_i\cdot\vh_j - v_i \cdot v_j} = \bigO(\eps\llo)$.

We have $v_i \cdot v_j = v_i\tp v_j = h_i\tp\W_V\tp\W_V h_j$. And also:

\begin{align*}
    &\abs{\hh_i\tp\U\tp\W_V\tp\W_V\U\hh_j - h_i\tp\W_V\tp\W_V h_j} = \abs{(\U\hh_i)\tp\W_V\tp\W_V (\U\hh_j) - h_i\tp\W_V\tp\W_V h_j}\\
    &= \abs{(\U\hh_i)\tp\W_V\tp\W_V (\U\hh_j) - (\U\hh_i)\tp\W_V\tp\W_V h_j + (\U\hh_i)\tp\W_V\tp\W_V h_j - h_i\tp\W_V\tp\W_V h_j} \\
    &= \abs{(\U\hh_i)\tp\W_V\tp\W_V (\U\hh_j - h_j) + (\U\hh_i - h_i)\tp\W_V\tp\W_V h_j} \\
    &\leq \abs{(\U\hh_i)\tp\W_V\tp\W_V (\U\hh_j - h_j)} + \abs{(\U\hh_i - h_i)\tp\W_V\tp\W_V h_j} \\
    &\leq (\sqrt{\alpha} + \eps\llo)\beta^2 \eps\llo + \eps\llo \beta^2\sqrt{\alpha} \\
    & \leq (2\beta^2\sqrt{\alpha} + \beta^2) \eps\llo.
\end{align*}
Now, if we take $\Wh_V = \U\tp\W_V\tp\W_V\U$, we have $\abs{\vh_i\cdot\vh_j - v_i \cdot v_j} \leq (2\beta^2\sqrt{\alpha} + \beta^2) \eps\llo$. For the brevity we define $\eps_v = (2\beta^2\sqrt{\alpha} + \beta^2) \eps\llo$.

   With similar calculations as \cref{lem:attention_layer_JL}, we have:
     $$\forall i, j: h\llh_i\cdot h\llh_j - t\eps\llo \leq \hh_i\llh \cdot \hh_j\llh \leq h_i\llh \cdot h_j\llh + t\eps\llo$$ for some $t \in \bigO(1)$.
\end{proof}

We will inductively show that the initial assumption described in that theorem holds, and thus the construction of those mappings are feasible.

In layer $\ell$ for each $i$ and $j$, assume we have $\abs{\hh\llh_i\cdot \hh\llh_i - h\llh_i \cdot h\llh_j} \leq t\llo\eps$ for some constant $t\llo$. The base of this induction works because of the construction and proves in \cref{lem:attention_layer_JL} and \cref{thrm:narrow_network_low_rank_2}.

Based on the clustering assumptions we prove the following lemmas that will help us in rest of the proof:

\begin{lemma}
\label{lem:same_cluster}
    If $h_i$ and $h_j$ are in the same cluster with cluster center $c_a$, we have $\frac{h_i \cdot h_j}{\norm{c_a}^2} = 1 + \bigO(\eps)$.
\end{lemma}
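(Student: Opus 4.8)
The plan is to write each of $h_i$ and $h_j$ as a small perturbation of the common center and expand the inner product. Throughout I drop the layer superscripts, as the paper does, since everything lives in the same layer. By the clustering hypothesis of \cref{thrm:narrow_network_cluster}, saying that $h_i$ lies in the cluster with center $c_a$ means precisely $\norm{h_i - c_a} \le \eps \norm{c_a}$, and likewise $\norm{h_j - c_a} \le \eps \norm{c_a}$. So I would set $h_i = c_a + \delta_i$ and $h_j = c_a + \delta_j$ with $\norm{\delta_i}, \norm{\delta_j} \le \eps \norm{c_a}$.

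Next I would expand $h_i \cdot h_j = \norm{c_a}^2 + c_a \cdot \delta_i + c_a \cdot \delta_j + \delta_i \cdot \delta_j$ and bound the three error terms by Cauchy--Schwarz: $\abs{c_a \cdot \delta_i}, \abs{c_a \cdot \delta_j} \le \eps \norm{c_a}^2$ and $\abs{\delta_i \cdot \delta_j} \le \eps^2 \norm{c_a}^2$. This gives $\abs{h_i \cdot h_j - \norm{c_a}^2} \le (2\eps + \eps^2)\norm{c_a}^2$. Since $\norm{c_a} \ge \gamma_1 > 0$ by assumption, we may divide by $\norm{c_a}^2$ to conclude $\abs{\,\frac{h_i \cdot h_j}{\norm{c_a}^2} - 1\,} \le 2\eps + \eps^2 = \bigO(\eps)$ for sufficiently small $\eps$, which is exactly the claim.

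There is essentially no real obstacle here; the only point worth being careful about is that every error term comes out naturally as a \emph{multiple} of $\norm{c_a}^2$ rather than as an absolute quantity, and it is this relative scaling that makes the ratio collapse to $1 + \bigO(\eps)$ cleanly. In particular only the lower bound $\gamma_1 \le \norm{c_a}$ is needed (to ensure we are not dividing by zero); the upper bound $\gamma_2$ plays no role in this lemma. One should also note that taking $\eps$ small enough lets us absorb $\eps^2$ into $\bigO(\eps)$, which is the standing ``sufficiently small $\eps$'' convention already in force.
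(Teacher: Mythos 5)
Your proof is correct and is essentially identical to the paper's: both expand $h_i \cdot h_j$ by writing $h_i = c_a + (h_i - c_a)$, $h_j = c_a + (h_j - c_a)$, bound the three cross terms by Cauchy--Schwarz using $\norm{h_i - c_a}, \norm{h_j - c_a} \le \eps\norm{c_a}$, and conclude $\abs{h_i \cdot h_j - \norm{c_a}^2} \le (2\eps + \eps^2)\norm{c_a}^2$. Your observations that only the lower bound $\gamma_1 \le \norm{c_a}$ matters here and that the error is naturally a \emph{relative} one are accurate, though the paper does not spell them out.
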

\begin{proof}[Proof of \cref{lem:same_cluster}:]
    \begin{align*}
        h_i \cdot h_j &= ((h_i - c_a) + c_a) \cdot ((h_j - c_a) + c_a) \\
        &\leq \norm{h_i - c_a}\norm{c_a} + \norm{h_j - c_a}\norm{c_a} + \norm{h_i - c_a} \norm{h_j - c_a} + \norm{c_a}^2 \\
        &\leq 2\eps \norm{c_a}^2 + \eps^2 \norm{c_a}^2 + \norm{c_a}^2 \\
        &= (1 + 2\eps + \eps^2) \norm{c_a}^2 \leq (1 + 3\eps) \norm{c_a}^2.
    \end{align*}

    Also, similarly we have: 

    \begin{align*}
        h_i \cdot h_j &= ((h_i - c_a) + c_a) \cdot ((h_j - c_a) + c_a) \\
        &\leq -\norm{h_i - c_a}\norm{c_a} - \norm{h_j - c_a}\norm{c_a} - \norm{h_i - c_a} \norm{h_j - c_a} + \norm{c_a}^2 \\
        &\leq -2\eps \norm{c_a}^2 - \eps^2 \norm{c_a}^2 + \norm{c_a}^2 \\
        &= (1 - 2\eps - \eps^2) \norm{c_a}^2 \leq (1 - 3\eps) \norm{c_a}^2.
    \end{align*}

    Thus, we have $1 - 3\eps \leq \frac{h_i \cdot h_j}{\norm{c_a}^2} \leq 1 + 3\eps $. \end{proof}

\begin{corollary}
\label{corollary_same_cluster}
    If $h_i$ and $h_j$ are in the same cluster with cluster center $c_a$, we have $\frac{\hh_i \cdot \hh_j}{\norm{c_a}^2} = 1 + \bigO(\eps)$.
\end{corollary}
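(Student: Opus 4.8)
The plan is to reduce \cref{corollary_same_cluster} to \cref{lem:same_cluster} plus the dot-product preservation guarantees already in hand. By \cref{lem:same_cluster}, since $h_i$ and $h_j$ lie in the same cluster with center $c_a$, we have $1 - 3\eps \leq \frac{h_i \cdot h_j}{\norm{c_a}^2} \leq 1 + 3\eps$, i.e.\ $h_i \cdot h_j = (1 + \bigO(\eps))\norm{c_a}^2$. So it suffices to bound the additive gap $\abs{\hh_i \cdot \hh_j - h_i \cdot h_j}$ and then divide through by $\norm{c_a}^2$.

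First I would invoke the appropriate dot-product-closeness statement for the layer under consideration: for the first layer this is \cref{lem:attention_layer_JL}, and for a general layer it is item~1 of \cref{lem:dot_product_low_rank} (the inductive hypothesis of the theorem's proof); both give $\abs{\hh_i\llh \cdot \hh_j\llh - h_i\llh \cdot h_j\llh} \leq t\eps$ for some constant $t = \bigO(1)$, using as before that the propagated error $\eps\llo$ stays $\bigO(\eps)$ for $L = \bigO(1)$, exactly as in the proof of \cref{thrm:narrow_attention}. Then, splitting
\[
\frac{\hh_i \cdot \hh_j}{\norm{c_a}^2} = \frac{h_i \cdot h_j}{\norm{c_a}^2} + \frac{\hh_i \cdot \hh_j - h_i \cdot h_j}{\norm{c_a}^2},
\]
the first term is $1 + \bigO(\eps)$ by \cref{lem:same_cluster}, and the second is at most $\frac{t\eps}{\norm{c_a}^2}$ in absolute value.

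The last point to check is that this correction term is genuinely $\bigO(\eps)$, and here the separation/boundedness hypothesis of \cref{thrm:narrow_network_cluster} does the work: $\norm{c_a} > \gamma_1$ for the fixed constant $\gamma_1 > 0$, so $\frac{t\eps}{\norm{c_a}^2} \leq \frac{t}{\gamma_1^2}\eps = \bigO(\eps)$. Combining the two terms yields $\frac{\hh_i \cdot \hh_j}{\norm{c_a}^2} = 1 + \bigO(\eps)$. I do not expect a real obstacle here; the corollary is essentially immediate from \cref{lem:same_cluster} and the already-proven dot-product preservation. The only things to be careful about are citing the correct closeness lemma for the layer in question and remembering that it is precisely the constant lower bound $\gamma_1$ on cluster-center norms that upgrades an additive $\bigO(\eps)$ error into the claimed multiplicative one.
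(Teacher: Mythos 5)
Your proof is correct and takes essentially the same route as the paper: apply \cref{lem:same_cluster} to the original dot product, use the inductive dot-product-closeness bound $\abs{\hh_i \cdot \hh_j - h_i \cdot h_j} \leq t^{(\ell)}\eps$, and divide through by $\norm{c_a}^2 > \gamma_1^2$ to control the correction term. The paper states this tersely; your version simply spells out which closeness lemma supplies the bound and why the constant lower bound $\gamma_1$ matters.
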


\begin{proof}[Proof of \cref{corollary_same_cluster}:]
    We saw that $\frac{h_i \cdot h_j}{\norm{c_a}^2} = 1 + \bigO(\eps)$ and we know $\abs{\hh_i \cdot \hh_j - h_i\cdot h_j} < t\llo\eps$. Since $\norm{c_a}^2 > \gamma_1^2$ the corollary is correct.
\end{proof}

\begin{lemma}
\label{lem:different_cluster}
    If $h_i$ and $h_j$ are from different clusters with cluster centers $c_a$ and $c_b$ accordingly, we have $\frac{h_i \cdot h_j}{\norm{c_a}^2} \leq 0.5 + \bigO(\eps)$.
\end{lemma}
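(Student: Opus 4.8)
The plan is to expand $h_i$ and $h_j$ around their respective cluster centers and then invoke the well-separation hypothesis. Write $h_i = c_a + (h_i - c_a)$ and $h_j = c_b + (h_j - c_b)$, so that
\[
h_i \cdot h_j = c_a \cdot c_b + c_a \cdot (h_j - c_b) + (h_i - c_a) \cdot c_b + (h_i - c_a)\cdot(h_j - c_b).
\]
By Cauchy--Schwarz together with the clustering bounds $\norm{h_i - c_a} \leq \eps \norm{c_a}$ and $\norm{h_j - c_b} \leq \eps \norm{c_b}$, the three cross terms are each at most a constant multiple of $\eps\,\norm{c_a}\norm{c_b}$; summing them gives a bound of $(2\eps + \eps^2)\,\norm{c_a}\norm{c_b}$. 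The remaining term $c_a \cdot c_b$ is controlled directly by the separation hypothesis $c_a \cdot c_b < \gamma_1^2/2$.

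Now divide through by $\norm{c_a}^2$. Using $\norm{c_a} > \gamma_1$ we get $\frac{c_a \cdot c_b}{\norm{c_a}^2} < \frac{\gamma_1^2/2}{\gamma_1^2} = \tfrac12$, while $\frac{\norm{c_a}\norm{c_b}}{\norm{c_a}^2} = \frac{\norm{c_b}}{\norm{c_a}} < \gamma_2/\gamma_1 = \bigO(1)$ since $\gamma_1$ and $\gamma_2$ are constants. Combining, $\frac{h_i \cdot h_j}{\norm{c_a}^2} \leq \tfrac12 + (2\eps + \eps^2)\cdot\bigO(1) = 0.5 + \bigO(\eps)$, which is the claim.

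I do not expect a genuine obstacle here: the argument is a single expansion plus triangle inequalities. The one point that needs care is that the normalization is by $\norm{c_a}^2$ — the center of $h_i$'s cluster, not $h_j$'s — so the cross terms produce the ratio $\norm{c_b}/\norm{c_a}$ rather than $1$; this stays $\bigO(1)$ precisely because of the two-sided bound $\gamma_1 < \norm{c_\cdot} < \gamma_2$. Likewise, the constant $0.5$ emerges exactly because the separation threshold $\gamma_1^2/2$ is matched against the worst-case lower bound $\gamma_1^2$ on $\norm{c_a}^2$. As in \cref{corollary_same_cluster}, this lemma will then be transferred to the compressed embeddings by combining it with the bound $\abs{\hh_i \cdot \hh_j - h_i \cdot h_j} < t\llo\eps$ and $\norm{c_a}^2 > \gamma_1^2$.
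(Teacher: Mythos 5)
Your proof is correct and follows essentially the same route as the paper's: expand $h_i\cdot h_j$ around the two cluster centers, bound the three cross terms by Cauchy--Schwarz using $\norm{h_i-c_a}\leq\eps\norm{c_a}$ and $\norm{h_j-c_b}\leq\eps\norm{c_b}$, bound $c_a\cdot c_b<\gamma_1^2/2\leq\tfrac12\norm{c_a}^2$, and divide by $\norm{c_a}^2$ using the two-sided bounds $\gamma_1<\norm{c_\cdot}<\gamma_2$. The only cosmetic difference is that the paper first coarsens $\norm{c_a}\norm{c_b}\leq\gamma_2^2$ and then divides, giving the error as $\bigO(\eps)\gamma_2^2/\gamma_1^2$, whereas you divide first and get $\bigO(\eps)\gamma_2/\gamma_1$; both are $\bigO(\eps)$.
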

\begin{proof}[Proof of \cref{lem:different_cluster}:]
    \begin{align*}
        h_i \cdot h_j &= ((h_i - c_a) + c_a) \cdot ((h_j - c_b) + c_b) \\
        &\leq \norm{h_i - c_a}\norm{c_b} + \norm{h_j - c_b}\norm{c_a} + \norm{h_i - c_a} \norm{h_j - c_b} + c_a \cdot c_b \\
        &\leq 2\eps \norm{c_a}\norm{c_b} + \eps^2 \norm{c_a}\norm{c_b} + \gamma_1^2/2 \\
        &= (2\eps + \eps^2) \gamma_2^2 + 1/2 \gamma_1^2 \leq 1/2 \norm{c_a}^2 + 3\eps \gamma_2^2.
    \end{align*}

    Thus we have $\frac{h_i \cdot h_j}{\norm{c_a}^2} \leq 0.5 + 3\eps \gamma_2^2/\norm{c_a}^2 \leq  0.5 + 3\eps \gamma_2^2/\gamma_1^2 = 0.5 + \bigO(\eps)$. \end{proof}

\begin{corollary}
\label{corollary:different_cluster}
    If $h_i$ and $h_j$ are from different clusters with cluster centers $c_a$ and $c_b$ accordingly, we have $\frac{\hh_i \cdot \hh_j}{\norm{c_a}^2} \leq 0.5 + \bigO(\eps)$.
\end{corollary}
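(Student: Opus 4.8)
The plan is to obtain this corollary directly from \cref{lem:different_cluster} and the dot-product closeness bound already in force at the current layer, in exactly the same way \cref{corollary_same_cluster} is deduced from \cref{lem:same_cluster}. Throughout, $h_i, h_j$ denote the pooled embeddings $h_i\llh, h_j\llh$ of the reference network and $\hh_i, \hh_j$ the corresponding vectors of the compressed network, and we use the induction hypothesis of the theorem: there is a constant $t\llo = \bigO(1)$ with $\abs{\hh_i \cdot \hh_j - h_i \cdot h_j} \le t\llo \eps$ for every pair $i, j$ (this is established in the base case by \cref{lem:attention_layer_JL} and propagated by \cref{lem:dot_product_low_rank}).

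First I would apply \cref{lem:different_cluster} to the pair $h_i, h_j$ lying in distinct clusters with centers $c_a$ and $c_b$; this gives $h_i \cdot h_j \le \frac12 \norm{c_a}^2 + 3\eps \gamma_2^2$, using the norm bounds $\gamma_1 < \norm{c_a}, \norm{c_b} < \gamma_2$ and the separation condition $c_a \cdot c_b < \gamma_1^2/2$. Next I would add the compression error to pass to the hatted quantity: $\hh_i \cdot \hh_j \le h_i \cdot h_j + t\llo\eps \le \frac12 \norm{c_a}^2 + (3\gamma_2^2 + t\llo)\eps$. Finally, dividing by $\norm{c_a}^2$ and invoking the uniform lower bound $\norm{c_a}^2 > \gamma_1^2$ (with $\gamma_1, \gamma_2$ constants independent of $n$) yields $\frac{\hh_i \cdot \hh_j}{\norm{c_a}^2} \le \frac12 + \frac{3\gamma_2^2 + t\llo}{\gamma_1^2}\eps = 0.5 + \bigO(\eps)$, which is the claim.

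There is no genuinely hard step here; the only thing to be careful about is that the constant $t\llo$ really is $\bigO(1)$, i.e., that the layer-to-layer recursion $t\lln = \bigO(1) + \bigO(1)\cdot t\llo$ supplied by \cref{lem:dot_product_low_rank} is not allowed to accumulate beyond $L = \bigO(1)$ iterations; since $L$ is constant, the hidden constant in the $\bigO(\eps)$ is uniform over all node pairs and all layers. Note that only an upper bound is asserted (and needed): vectors from different clusters can be nearly orthogonal or even anti-correlated, so no matching lower bound on $\hh_i \cdot \hh_j$ is expected or required.
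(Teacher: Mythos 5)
Your proof is correct and follows the same route as the paper: combine \cref{lem:different_cluster} with the induction-hypothesis bound $\abs{\hh_i \cdot \hh_j - h_i\cdot h_j} < t\llo \eps$ and divide by $\norm{c_a}^2 > \gamma_1^2$. The paper's own proof is just a one-line statement of this; you have merely spelled out the same steps explicitly.
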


\begin{proof}[Proof of \cref{corollary:different_cluster}:]

    Again this is an immediate result of the previous lemma and $$\abs{\hh_i \cdot \hh_j - h_i\cdot h_j} < t\llo \eps.$$ \end{proof}

Now, we construct the feed forward layers following the attention layer: for constructing $\Wh\llo_1$, as we do not have the centers of the clusters in the low dimension particularly, we select just one of the node representations from that cluster as $\hat{c}_a$. If the number of clusters is smaller than $d$ we set cluster centers for these clusters to all zero vectors. We set the $a$th row of $\Wh\llo_1$ to be $4\frac{\hat{c}_a}{\norm{c_a}^2}$, we use bias of the linear mappings here and set it to $-3\norm{c_a}^2$.

Now, for each $\hh_i$ if it is in cluster $a$, we have $(\Wh\llo_1\hh_i)_a = 1 + \bigO(\eps)$ due to \cref{corollary_same_cluster} and for a small enough $\eps$, we have $\forall b\neq a: (\Wh\llo_1\hh_i)_b < 0$. Thus $ \ReLU(\Wh\llo_1\hh_i)$ is almost one-hot vector, being absolutely zero in any index $b\neq a$ and have value $1 + \bigO(\eps)$ in index $a$.

We will use these almost 1-hot vectors to create the embeddings using $\Wh\llo_2$. But before doing that, we will show that the assumption will lead to having a low-rank approximation on the higher dimension network after the $\ReLU$:

\begin{lemma}
    $\Ho\llt$ has a low-rank estimation $\Hb\llt$ of maximum rank $d$ such that $\norm{h\llt_i - \hb\llt_i} = \bigO(\eps)$.
\end{lemma}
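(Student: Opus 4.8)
The plan is to obtain $\Hb\llt$ by collapsing each column of $\Ho\llt$ onto the $\ReLU$-image of the cluster center that its pre-activation is close to. Since the hypothesis of \cref{thrm:narrow_network_cluster} guarantees that $\Ho\llh$ clusters into at most $d$ centers $c\llo_1,\dots,c\llo_d$ with $\norm{h_i\llh - c\llo_{a}} \le \eps\,\norm{c\llo_{a}}$ for the center $c\llo_a$ assigned to node $i$, this construction yields a matrix with at most $d$ distinct columns, hence of rank at most $d$; the only thing left to verify is that the column-wise perturbation stays $\bigO(\eps)$.

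Concretely, first I would, for each node $i$, fix the cluster index $a(i)\in[d]$ supplied by the assumption, so that $\norm{h_i\llh - c\llo_{a(i)}} \le \eps\,\norm{c\llo_{a(i)}} \le \eps\gamma_2$, and set the $i$-th column of $\Hb\llt$ to be $\hb_i\llt := \ReLU\!\bigl(\W_1\llo c\llo_{a(i)}\bigr)$. Next I would bound the per-column error: $\ReLU$ acts entrywise and is $1$-Lipschitz on $\R$, hence $1$-Lipschitz in the Euclidean norm (summing the coordinatewise inequalities), and together with $\norm{\W_1\llo}_{op}\le\beta$ this gives
\[
\norm{h_i\llt - \hb_i\llt} = \norm{\ReLU(\W_1\llo h_i\llh) - \ReLU(\W_1\llo c\llo_{a(i)})} \le \beta\,\norm{h_i\llh - c\llo_{a(i)}} \le \beta\gamma_2\,\eps = \bigO(\eps),
\]
using that $\beta,\gamma_2 = \bigO(1)$. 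Finally, every column of $\Hb\llt$ equals one of the at most $d$ vectors $\ReLU(\W_1\llo c\llo_1),\dots,\ReLU(\W_1\llo c\llo_d)$, so $\rank(\Hb\llt)\le d$, which completes the argument.

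I do not expect a genuine obstacle here — the statement is essentially a Lipschitz-continuity observation — but two minor points deserve a sentence each: the passage from entrywise $1$-Lipschitzness of $\ReLU$ to $1$-Lipschitzness in $\norm{\cdot}$, and the remark that an additive bias in the reference feed-forward layer (if present) is shared by all columns and therefore does not affect the bound. It is also worth noting that this lemma is exactly what lets the remainder of the proof compress $\W_2\llo$: since $\rank(\W_2\llo\Hb\llt)\le d$ as well, \cref{lem:low_rank_est_map} applies to $\W_2\llo\Ho\llt$ with column-wise error $\bigO(\eps)$.
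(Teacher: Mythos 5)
Your proof is correct and matches the paper's argument: replace each column of $\Ho\llt$ by $\ReLU(\W_1\llo c\llo_{a(i)})$ for the assigned cluster center, bound the per-column error by $\beta\gamma_2\eps$ via the operator norm of $\W_1\llo$ and the $1$-Lipschitzness of $\ReLU$, and note that the resulting matrix has at most $d$ distinct columns. You supply the explicit constants and a couple of clarifying remarks (entrywise-to-Euclidean Lipschitzness, biases) that the paper's terse version omits, but the construction and reasoning are the same.
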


\begin{proof}
    Since for each $h\llh_i$ we have a $c_a$ such that their distance is $\bigO(\eps)$, and due to the maximum operator norm of $\W_1\llo$ and Lipschitzness of $\ReLU$, we have $\norm{\ReLU(W_1\llo h_i) - \ReLU(W_1\llo c_a)} = \bigO(\eps)$. Now we can make $\Hb\llt$ in this way that instead of each $h\llt_i$ replace it with the mapping of its cluster center $\ReLU(W_1\llo c_a)$. Now, $\Hb\llt$ has maximum $d$ distinct rows, thus its rank is maximum $d$.
\end{proof}

Since $\Hb$ is of rank maximum $d$, we can have $\U \in \R^{D \times d}$ and $\Lam \in {d \times D}$ that $\Hb = \U\Lam\Hb$.
Now, for creating $\Wh\llo_2$ we make a $d \times d$ matrix having column $a$ as $\Lam \W\llo_2 \ReLU(\W\llo_1 c_a)$. Now, each row of $\Hh\lln$ will be a $(1+\bigO(\eps)) \W\llo_2 \ReLU (\W_1c_a)$ for some $c_a$, thus each column of $U\Hh\lln$ will estimate the corresponding column of $\Ho\lln$ with maximum norm-2 distance of $\bigO(\eps)$.

This will satisfy the required conditions for the start of a layer as described in \cref{thrm:narrow_network_low_rank_2}. Thus, we can create low-rank matrices  
\end{proof}

\section{Experiment Details}
\label{sec:exp_details}
\subsection{Datasets}

Below, we provide descriptions of the datasets on which we conduct experiments. Summary statistics of these datasets are provided in \cref{table:datasets}.

\paragraph{Photo}
This dataset is part of the Amazon co-purchase graph \citep{mcauley2015image}, where each node represents a product on the Amazon website, and an edge indicates that the corresponding products were frequently purchased together. Node features are derived from a bag-of-words summary of the reviews for each product. The task is to classify the nodes into different product categories \citep{shchur2018pitfalls}. We use a random train/validation/test split with a ratio of 0.6/0.2/0.2 for training.

\paragraph{Minesweeper}
This dataset was first introduced in \cite{platonov2023critical}. The dataset is a graph representation of the 100x100 grid from the Minesweeper game. Each node represents a cell, and edges connect to the eight neighboring cells. 20\% of the nodes are marked as mines. The node features are the one-hot encoding of the number of mines among the neighbors. For 50\% of the nodes, the features are unknown and indicated by a separate binary feature.

\paragraph{Tolokers}
This dataset was also first introduced in the \cite{platonov2023critical}. Tolokers is a graph representation of workers on a crowdsourcing platform called Toloka. Two nodes are connected if the workers have worked on the same project. Node features are based on the worker's task performance statistics and other profile information. The task is to predict which nodes have been banned for a project.

\begin{table}[htp]
    \centering
    \caption{Dataset statistics. The reported number of edges is the number of directed edges, which will be twice the number of actual edges for the undirected graphs. The homophily score is a metric to measure what ratio of the neighbor nodes are from the same class.}
    \fontsize{8.5pt}{8.5pt}\selectfont
    \setlength\tabcolsep{6pt} %
    \scalebox{1}{
    \begin{tabular}{lcccccc}
    \toprule
         {\bf Dataset} & {\bf Nodes} & {\bf Edges} & {\bf Homophily Score} & {\bf Node Features} & {\bf Classes} &{\bf Metric} \\
         \midrule
Amazon Photo & 7,487 & 238,162 & 0.772 & 745 & 8 & Accuracy \\
Minesweeper & 10,000 & 78,804 & 0.009 & 7 & 2 & AUC\\ 
Tolokers & 11,758 & 1,038,000 & 0.187 & 10 & 10 & AUC \\
    \bottomrule
    \end{tabular}
    }
     \label{table:datasets}
\end{table}

\subsection{Network Details}
We train the Transformer model only on the graph edges here. Another alternative could be to train it with an augmented expander graph as suggested in \cite{shirzad2023exphormer}. The model implementation is very similar to the formulation explained in \cref{sec:notations}; however, for performance purposes, there are a few small changes:
\begin{enumerate}
    \item We follow the standard attention implementation and add $\frac{1}{\sqrt{D}}$ normalization to the attention scores. This works much better in practice.
    \item We use skip connections to improve the training process.
    \item We use batch normalization instead of layer normalization, as our experiments and the results from the Exphormer model showed that batch normalization actually works better than layer normalization in Graph Transformers.
\end{enumerate}

For all datasets and for both large and small models we used four layers of the network.

\subsection{Hyperparameter Search} For both large and small models, we performed a grid search on the base learning rate and the number of epochs, selecting the configuration with the highest average accuracy or AUC based on the standard metric for the dataset. We chose the base learning rate from $\{0.1, 0.01, 0.001\}$ and the number of epochs from $\{50, 100, 150, 200\}$. We used AdamW \citep{loshchilov2017fixing} for optimization and a cosine learning rate scheduler \citep{loshchilov2016sgdr}, as is common in training Transformers and Graph Transformers. We did twenty runs for larger networks to measure the mean and 100 runs for the small networks.

\subsection{Operator Norms and Vector Norms}
\label{subsec:norms}
To validate the assumptions in \cref{sec:notations}, we measured the average operator norm of the linear mappings and the norm of the vectors for the input of each layer, and then averaged these values across the layers in three datasets we experimented on. Results are provided in \cref{tab:norms}. The resulting numbers are reasonably small, allowing us to assume them to be $\bigO(1)$ in the theory.

\subsection{Comparing the Results to Baselines}
To understand the significance of the results and to provide context, we compare the performance of our compressed network with simple baselines, namely MLP, GCN, Nodeformer, and Exphormer models. Results are provided in \cref{tab:results_comp}. Our goal is to demonstrate that a very small network with a hidden dimension of 4 can achieve much better results than these methods, given that we can properly compress the large network.

\begin{table}[h!]
    \centering
    \caption{Comparing found results with some baselines, showing the models are well-trained and have competitive results.}
    \begin{tabular}{l|ccc}
    \toprule
        Dataset & Tolokers & Minesweeper & Photo\\
        \midrule
        MLP & $0.730 \pm 0.0106$ & $ 0.509 \pm 0.014$ & $0.696 \pm 0.038$\\
        GCN & $0.836 \pm 0.007$ & $0.898 \pm 0.005$ & $ 0.927 \pm 0.002$\\
        NodeFormer & $0.781 \pm 0.001$ & $0.867 \pm 0.009$ & $0.935 \pm 0.004$\\
        Exphormer & $0.835 \pm 0.003$ & $0.923 \pm 0.006$ & $0.954 \pm 0.002$ \\
        \midrule
        Average Large Network & $0.844 \pm 0.002$ & $0.943 \pm 0.001$ & $0.953 \pm 0.004$\\
        Average Small Network & $0.821 \pm 0.011$ & $0.886 \pm 0.054$ & $0.910 \pm 0.016$\\
        Max Small Network & $0.844$ & $0.938$ & $0.944$\\
    \bottomrule
    \end{tabular}
    
    \label{tab:results_comp}
\end{table}
\end{document}